\numberwithin{equation}{section}
\newcolumntype{Y}{>{\raggedleft\arraybackslash}X}
\newcommand{\N}{\mathbb N}
\newcommand{\R}{\mathbb R}
\newcommand{\E}{\mathbb E}
\renewcommand{\S}{\mathbb S}
\renewcommand{\P}{\mathbb P}
\newcommand{\Var}{\mathrm{Var}}
\newcommand{\quadre}[1]{\left[ #1 \right]}
\newcommand{\tonde}[1]{\left( #1 \right)}
\newcommand{\abs}[1]{\left| #1 \right|}
\newcommand{\di}{\mathrm d}
\def\be{\begin{eqnarray}}
	\def\ee{\end{eqnarray}}
\def\ben{\begin{eqnarray*}}
	\def\een{\end{eqnarray*}}
\DeclareMathOperator{\cov}{Cov}
\newtheorem{theorem}{Theorem}[section] % Numerazione basata sulle sezioni
\newtheorem{lemma}[theorem]{Lemma}     % Lemmi con la stessa numerazione dei teoremi
\newtheorem{proposition}[theorem]{Proposition}
\theoremstyle{definition}
\newtheorem{definition}[theorem]{Definition}
\newtheorem{remark}[theorem]{Remark}
\pgfplotsset{compat=1.18}
	\title{Spectral Complexity of Deep Neural Networks}
\author{Simmaco Di Lillo}
\author{Domenico Marinucci}
\author{Michele Salvi}
\author{Stefano Vigogna}
\affil{RoMaDS - Department of Mathematics, University of Rome Tor Vergata, 	Rome, Italy \newline
{\tt{\{dilillo, marinucc, salvi, vigogna\}@mat.uniroma2.it}}}
\date{}                     %% if you don't need date to appear
\begin{document}

	\maketitle

	\begin{abstract}
		It is well-known that randomly initialized, push-forward, fully-connected neural networks weakly converge to isotropic Gaussian processes, in the limit where the width of all layers goes to infinity. In this paper, we propose to use the angular power spectrum of the limiting fields to characterize the complexity of the network architecture. In particular, we define sequences of random variables associated with the angular power spectrum, and provide a full characterization of the network complexity in terms of the asymptotic distribution of these sequences as the depth diverges. On this basis, we classify neural networks as low-disorder, sparse, or high-disorder; we show how this classification highlights a number of distinct features for standard activation functions, and in particular, sparsity properties of ReLU networks. Our theoretical results are also validated by numerical simulations.
	\end{abstract}
	\noindent \textbf{Keywords}:  Deep learning, neural networks, isotropic random fields, Gaussian processes, compositional kernels, angular power spectrum, model complexity.\\
		\noindent \textbf{MSCcodes}:  	68T07, 60G60, 33C55, 62M15.

	\section{Introduction}
	Allowing depth in neural networks has been instrumental in elevating them to the forefront of machine learning,
	paving the way for unprecedented results spanning from image recognition to natural language processing,
	up to the latest wonders of generative AI.
	Mirroring human abstraction, deep artificial networks extract relevant features in a hierarchical fashion, constructing complex representations from simpler ones,
	and discerning high-dimensional patterns in reduced dimensions.
	On the other hand, just stacking more layers comes with its own risks, including overfitting, vanishing gradients, and general optimization instability,
	which makes depth a resource to use with care.
	Identifying measures of complexity capturing benefits and drawbacks of depth
	can thus provide a principled way to explain empirical behaviors and guide algorithmical choices.

	Besides intuitions and practical evidence, a theoretical understanding of the role of depth in neural architectures remains a challenging endeavor.
	From an approximation perspective,
	the problem has been addressed in terms of \emph{depth separation}.
	This approach goes beyond classical universal theorems \cite{cybenko,hornik,leshno,pinkus},
	quantifying how the approximation error can scale exponentially along the width
	and polynomially along the depth
	\cite{pmlr-v49-telgarsky16,pmlr-v49-eldan16,mhaskar2016deep, pmlr-v65-daniely17a,pmlr-v70-safran17a,MR4134774,JMLR:v23:21-1109}.
	Recent work has extended these results to the \emph{infinite-width limit},
	shifting the representation cost from the number of units to a minimal weight norm \cite{parkinson2024depth}.
	Other approaches have explored depth's impact by estimating functional properties
	such as slope changes and number of linear regions in ReLU networks \cite{NIPS2014_109d2dd3,pmlr-v97-hanin19a,GOUJON2024115667}.
	Such quantities can be convexified by specific families of seminorms,
	thereby enforcing practical regularization strategies \cite{doi:10.1137/22M147517X}.
	More generally, people have studied how the topology of the network changes with respect to its architecture,
	tracking the dependence of topological invariants on depth and activation function \cite{6697897,10.5555/3016100.3016187}.
	Within this framework, Betti numbers have been notably considered
	and used to define suitable notions of complexity.
	More traditional model complexities such as VC-dimension and Rademacher
	have also been bounded for neural networks \cite{NIPS1998_bc731692,JMLR:v20:17-612,NIPS2017_b22b257a}.
	However, such bounds can be too loose in typical overparameterized regimes.

	An alternative point of view that is particularly relevant for our study
	is provided by the theories of reproducing kernels \cite{aronszajn50reproducing}
	and Gaussian processes \cite{10.5555/1162254}.
	The link with neural networks may be sketched as follows.
	In suitable infinite-width limits, sometimes called \emph{kernel regimes},
	neural networks are equivalent to Gaussian processes,
	and are therefore characterized by positive definite kernels
	\cite{neal2012bayesian,NIPS1996_ae5e3ce4,NIPS2016_abea47ba,lee2018deep,g.2018gaussian,10.1214/23-AAP1933,cammarota_marinucci_salvi_vigogna_2023,balas24,favaro2023quantitative,NEURIPS2018_5a4be1fa}.
	Furthermore, the kernel corresponding to a deep neural network has an interesting compositional structure,
	namely it can be obtained by iteration of a fixed (shallow) kernel as many times as the number of hidden layers.
	This observation has sparked the idea of giving depth to kernel methods,
	with the hope of enhancing, if not feature learning, at least the expressivity of the model
	\cite{NIPS2009_5751ec3e,pmlr-v51-wilson16,JMLR:v20:17-621,JMLR:v24:23-0538}.
	In this context, potential benefits may be proved by checking
	whether the associated reproducing kernel Hilbert space (RKHS) actually expands when iterating the kernel \cite{JMLR:v24:23-0538}.
	In a somewhat opposite yet related spirit,
	one can use the infinite-width limits to study RKHS as approximations of neural network models.
	It turns out that, even when the kernel is deep, the corresponding RKHS can be shallow,
	that is, it can be constant with respect to the depth of the kernel.
	More precisely, \cite{bietti2021deep} have shown that the spectrum of the integral operator
	associated to a deep ReLU kernel has the same asymptotic order regardless of the number of layers.
	As a consequence, ReLU RKHS of any depth are all equivalent,
	and therefore their structure fails in capturing the additional complexity induced by the depth. In this paper we argue that the failure of the RKHS structure does not imply the general unsuitability of kernel regimes for studying depth in neural architectures.
	While \cite{bietti2021deep} look at the tail of the spectrum, which indeed completely determines the RKHS, we look at the \emph{whole} spectrum, which may well depend on the depth even when the tail does not.

	\subsection*{Our approach} In this paper we  study the spectrum of the  random feature kernel associated to infinitely wide neural networks as a function of depth and activation function.
	Regarding neural networks as random fields on the sphere, such a spectrum coincides with the \emph{angular power spectrum} of the field,
	i.e.~the variance of the coefficients of its harmonic expansion.
	Normalizing the variance of the neural network, we identify the angular power spectrum with a probability distribution on the non-negative integers,
	that we call the \emph{spectral law} of the neural network.
	We denote by $X_L$ the associated random variable,
	stressing its dependence on the depth $L$.
	Our first main result (\Cref{main_theorem_1}) characterizes neural networks
	based on the behavior of the moments of $X_L$.
	%as the depth grows.
	Depending on the form of the activation function,
	% and in particular on whether the derivative of the associated kernel is smaller than, equal to, or larger than one,
	we distinguish three regimes:
	a \emph{low-disorder} case (including the Gaussian activation), where the finite moments of $X_L$ decay exponentially to zero as $L \to \infty$;
	a \emph{sparse case} (including ReLU and Leaky ReLU), where the first moment goes to $0$, the second is uniformly bounded and the others grow polynomially -- when they exist;
	and a \emph{high-disorder} case (including the hyperbolic tangent), where the finite moments diverge exponentially.
	Our second main result (\Cref{main_theorem_3})
	studies the asymptotics of the associated random fields $T_L$ and its derivatives as $L\to\infty$:
	in the low-disorder case, $T_L$ is asymptotically constant
	and the derivatives converge to $0$ exponentially fast;
	in the sparse case, $T_L$ is asymptotically constant but its derivatives do not vanish, which can be interpreted as a random field living on low frequencies but with some isolated spikes;
	in the high-disorder case, $T_L$ is not asymptotically constant and its derivatives diverge exponentially, so that the field becomes more and more chaotic as $L$ grows.
	In particular, ReLU networks can be well approximated by low-degree polynomials in $L^2$, but not in Sobolev norms.
	This fact agrees with the intuition that ReLU induces sparsity/self-regularization \cite{pmlr-v15-glorot11a}.

	Finally, we introduce two indexes of complexity:
	the \emph{spectral effective support},
	which tells which multipoles capture
	most of the norm/variance of the network;
	and the \emph{spectral effective dimension},
	which measures the total dimension of the corresponding eigenspaces.
	Consistently with the previous considerations, these quantities are surprisingly low for ReLU networks: we show numerically that 99\% or more of the norm is supported on less than a handful of spectral multipoles for arbitrarily large depth.

	In conclusion, our key contributions may be summarized as follows.
	\begin{enumerate}[topsep=4pt,itemsep=2pt,parsep=0pt]
		\item We propose a new framework for studying the role of depth in neural architectures.
		Our approach is based on the theory of random fields,
		and more specifically on the spectral decomposition of isotropic random fields on the hypersphere.
		\item Following this approach, we classify networks in three regimes
		where depth plays a significantly different role. In short, these regimes correspond to degenerate asymptotics (convergence to a trivial limit), asymptotic boundedness, and exponential divergence.
		\item In particular, we show that ReLU networks fall into the intermediate regime, characterized by convergence in $L^2$ and divergence in Sobolev norms. This suggests that ReLU networks have a sparse/self-regularizing property, which may allow them to go deeper with less risk of overfitting.
		\item Based on the above, we introduce a new simple notion of complexity for neural architectures,
		aimed at describing the effects of depth with respect to different choices of the activation function.
	\end{enumerate}

	\subsection*{Comparison with the existing literature}
	Studying neural networks through spectral properties is not new in the literature \cite{bietti2021deep,bietti2021approximation,xiao2022eigenspace,cagnetta2023can}.
	While previous works focus on the tail of the spectrum, we characterize its full distribution. Since the depth of the network affects the distribution, but may not change its tail, our approach is better suited to describe the behavior of the network as the depth increases.
	For instance, as argued earlier, neural networks of varying depth may live in the same Sobolev space as functions \cite{bietti2021deep},
	and yet reveal distinct behaviors as random fields. By studying the whole spectrum, one has access to the frequencies that dominate the oscillations of the field as the network  deepens (\Cref{main_theorem_1}), or can quantify the rate of explosion (or contraction) of its higher order derivatives (\Cref{main_theorem_3}).

	Our work is also related to the so-called \emph{edge of chaos} \cite{NIPS2004_f8da71e5,pmlr-v9-glorot10a,NIPS2016_14851003,schoenholz2017deep,pmlr-v97-hayou19a},
	which consists of a set of initializations
	that lie
	between an ordered and a chaotic phase.
	Initializing at this edge allows input information to well propagate through the layers of a deep network during gradient descent,
	proving the criticality of a good initialization for successful training.
	Under the hypothesis of unitary variance at every layer, our sparse regime corresponds to the edge of chaos.
	However, our approach yields stronger results. For example, in \cite{pmlr-v97-hayou19a} the authors show that for initializations both on the edge of chaos and in the ordered phase (corresponding to our low disorder regime) the covariance function converges to $1$. While this is consistent with our finding that $T_L$ goes to a constant in $L^2$, we are also able to control the convergence of the derivatives of the field (and hence the convergence of the field in Sobolev norms), marking a clear separation between sparse and low disorder phase.

	Our approach also yields other important advantages  that cannot be obtained by just controlling the covariance decay, as in the edge of chaos literature, or the tails of the spectrum. Indeed, the control of the full spectrum allows us to introduce the notion of spectral support, which gives an indication of the effective degree of liberty of the network (even in the case of networks with finite width, without Gaussian approximation). For example, we show that ReLU networks at inizialitation are well approximated in the $L^2$ sense by polynomials of low degree, whereas approximating them in high Sobolev norms requires  polynomials of higher and higher degree. Finally, combining the control of the spectral support with the study of higher order derivatives of the field, one remarkably obtains  mathematical evidence for the existence of spikes in the sparse regime (see  \Cref{spikes}), validating what has been observed  empirically for ReLU networks.

	\section{Background and notation}\label{sec::Background}

	In this section we recall some basics on isotropic random fields
	and their decomposition in spherical harmonics,
	as well as some known facts
	about random neural networks
	and their Gaussian process limit
	at infinite width.

	\subsection{Isotropic random fields on the sphere}
	Let $T:\, \S^d \times \Omega \to \R$ be a measurable application for some probability space $(\Omega, \mathcal F, \P)$. If $T$ is isotropic, meaning that the law of $T(\cdot)$ is the same as $T(g\cdot)$ for all $g\in \operatorname{SO}(d)$ (the special group of rotations in $\R^{d+1}$), and it has finite covariance, then the spectral representation
	\begin{equation}\label{espansione} T(x,\omega) = \sum_{\ell=0}^{\infty } \sum_{m=1}^{n_{\ell,d}} a_{\ell m}(\omega) Y_{\ell m }(x), \qquad x\in \S^d, \ \omega\in \Omega
	\end{equation}
	holds in $L^2(\Omega \times \mathbb{S}^d)$, i.e.
	$$ \lim_{M\to \infty} \int_\Omega \int_{\S^d} \abs{T(x,\omega) -\sum_{\ell=0}^{M} \sum_{m=1}^{n_{\ell,d}} a_{\ell m}(\omega) Y_{\ell m}(x)}^2 \di x \, \di {\mathbb P(\omega)} =0\ .$$
	Here we adopt a standard notation, and in particular we write $\{Y_{\ell m}\}$ for a $L^2$-orthonormal basis of real-valued spherical harmonics, which satisfy
	\begin{equation}\label{laplacian}\Delta_{\S^d} Y_{\ell m} = -\ell (\ell + d- 1) Y_{\ell m} \ , \ \ell\in \N
	\end{equation}
	where $\Delta_{\S^d}$ is the Laplace-Beltrami operator on the sphere. For all $\ell \in \N$, the dimension of the eigenspaces, i.e. the cardinality of the basis elements $\{Y_{\ell m}\}$,  is given by
	\begin{equation}\label{dim_spazio}
		\begin{aligned}
			&n_{0, d} = 1 \\
			&n_{\ell,d} = \frac{2\ell + d - 1}{\ell} \binom{\ell + d - 2}{\ell-1}
			%\sim \ell^{d-1}\ ,\  \text{as} \  \ell \rightarrow \infty
			\quad \ell \geq 1
			\ .
		\end{aligned}
	\end{equation}
	Moreover, the random coefficients $\{a_{\ell m}\}$ are such that
	\begin{equation}\label{alm}
		\E[a_{\ell m} a_{\ell'm'}] = C_\ell \delta_{\ell,\ell'} \delta_{m,m'}, \qquad \ell \in \N,\  m =1, \dots, n_{\ell,d}
	\end{equation}
	where $\{C_{\ell}\}_{\ell\in \N}$ is the angular power spectrum of $T$.
	Without loss of generality, we take the expected value of the field $T$ to be zero; for all $x,y\in\S^d$, using the well-known Schoenberg theorem, the covariance is given by
	\begin{equation}\label{covariance}
		\E[T(x) T(y)] = \sum_{\ell=0}^{\infty } C_\ell \frac{ n_{\ell,d}}{\omega_{d}}G_{\ell,d}(\langle x, y  \rangle )
	\end{equation}
	where $\omega_d$ is the surface area of $\S^d$ and $(G_{\ell, d})_{\ell\in \N}$ is the sequence of the so-called  normalized Gegenbauer polynomials. This is the unique sequence of polynomials such that for each $\ell\in\N$ it holds $\deg G_{\ell,d} = \ell$ and $G_{\ell, d} (1) = 1$, and for any $\ell\neq\ell'\in\N$
	\begin{equation}\label{orto}\int_{-1}^1 G_{\ell,d}(x) G_{\ell',d}(x) (1-x^2)^{d/2-1} \di x = 0\,.%
	\end{equation}
	%A_\ell \delta_{\ell,\ell'}$$
	%for some constant $A_\ell$
	%and $G_{\ell, d} (1) = 1$.
	It is well-known (see for instance~\cite{Szegoe}) that\begin{equation}\label{geg_der}\frac{\mathrm d} {\mathrm dx} G_{\ell,d}(x)= \frac{ \ell ( \ell +d -1 ) }{d}G_{\ell-1, d+2}(x) \ .
	\end{equation}

	\subsection{Random neural networks}
	Let us now recall briefly what we mean by a random neural network. For any given integer $L, n_0, n_1 \dots, n_{L+1}  $,  greater than 1 integers and for any given function $\sigma:\, \R\to \R$ such that $\Gamma_\sigma= \E[\sigma(Z)^2] <\infty$ for $Z\sim \mathcal N(0,1)$, we denote by $(T_s)$ the sequence of random fields given by
	\begin{equation}\label{random_field} T_s (x) = \begin{cases}
			W^{(0)} x + b^{(1)}, & s =0\\
			W^{(s)} \sigma(T_{s-1}(x)) + b^{(s+1)}, & s = 1, \dots, L
		\end{cases}, \qquad x \in \S^{n_0}
	\end{equation}
	where $b^{(s)}  \in \R^{n_{s}}$ and $W^{(s)} \in \R^{n_{s+1} \times n_s}$ are random vectors and matrices with independent components such that $b^{(s)}_i \sim N(0, \Gamma_b )$, $W_{ij}^{(0)} \sim \mathcal N(0, \Gamma_{W_0})$ and $W_{ij}^{(s)} \sim \mathcal N\tonde{0 , \frac{ \Gamma_W}{n_s}}$ for some positive constants $\Gamma_b,\Gamma_{W_0},\Gamma_W$. To simplify the notation, in the sequel we shall take $n_0 = d$.

	It is well established (see \cite{neal2012bayesian, 10.1214/23-AAP1933,cammarota_marinucci_salvi_vigogna_2023,balas24,favaro2023quantitative} and the references therein) that for all $s>1$, as $n_1, \dots, n_{s} \to \infty$, the random field $T_s$ converges weakly to a Gaussian vector field with $n_{s+1}$ i.i.d. centered components $(T_{i;s}^\star)_{i=1,\dots, n_{s+1}}$. We denote the corresponding limiting covariance kernel by $K_s$.  More precisely, assuming the standard calibration condition $\Gamma_b +\Gamma_{W_0} =1$ and taking $\Gamma_W = (1- \Gamma_b)\Gamma_\sigma^{-1}$ (so that each layer has unit variance), we have that $K_s(x,y)$  only depends on the angle between $x$ and $y$. In particular,
	\begin{equation*}
		\begin{aligned}
			&K_s(x,y)=\kappa_s(\langle x , y \rangle)
		\end{aligned}
	\end{equation*}
	with
	\begin{equation}\label{cov_ric} \kappa_s(u) = \kappa_1 \circ \dotsb \circ \kappa_1(u)
	\end{equation}
	composed $s-1$ times. Furthermore  $\kappa_1(1)=\kappa_s(1)=1$ (see \Cref{lem::kernel} below for more details).

	\section{Main results}\label{sec::main} For any $L$, let $T_L:\, \S^d \to \R$ denote one of the components of the limiting Gaussian process $(T_{i;L}^\star)_{i=1,\dotsc, n_{L+1}}$.By the Schoenberg theorem, the covariance kernel of $T_L$ can be expressed as
	\begin{equation} \label{cov_gege}
		K_L(x,y)
		= \kappa_L(\langle x , y\rangle)
		= \sum_{\ell =0}^\infty D_{\ell;\kappa_L} G_{\ell,d}(\langle x, y\rangle)
	\end{equation}
	where $D_{\ell;\kappa_L}$ are non-negative coefficients and $G_{\ell,d}$ are the normalized Gegenbauer polynomials. As $G_{\ell,d}(1)=1$, we can associate to each $T_L$ an integer-valued random variable $X_L$ with the following probability mass function:
	\begin{equation}\label{associated_rv}
		\P(X_L= \ell) = D_{\ell;\kappa_L} \ , \ \ell\in \N .
	\end{equation}
	We call \eqref{associated_rv}
	the \emph{neural network spectral law}. The main idea of our paper is that the asymptotic behavior of this probability distribution when $L$ increases provides insights on important features of the corresponding neural architecture. In particular, our first main result characterizes neural networks by studying the moments of their spectral law.

	\begin{theorem}\label{main_theorem_1}
		Let $\kappa:[-1,1]\to\R$ given by $\E[T_1(x)T_1(y)] = \kappa(\langle x, y \rangle)$. We assume that $\kappa\in C^1$.

		\smallskip

		\begin{itemize}[leftmargin=*]
			\item {\bf Low-disorder.}  If $\kappa'(1)<1$, then:
			\begin{enumerate}[label=\textnormal{(\alph*)}]

				\item If $\kappa$ is infinitely differentiable  in a neighborhood of $1$, then $X_{L}$ has all finite moments. These moments decay exponentially to zero as $L$ diverges, and in particular
				\begin{equation} \label{case1a}\E[ X_L^{n}]=O\tonde{ \kappa'(1)^{L}} \quad as \ L \rightarrow \infty \,. \end{equation}
				\item If $\kappa$ is differentiable only $r$ times in a neighborhood of $1$, then  $X_L$ has only $2r$ finite moments that decay as in~\eqref{case1a}.
			\end{enumerate}
			\item {\bf Sparse.} If $\kappa'(1) = 1$, then
			\begin{equation}\label{self-regu}
				\lim_{L\to \infty} \E[X_L]= 0 \ , \qquad
				1\leq \E[X_L^2]\leq d\ .
			\end{equation}
			Furthermore:
			\begin{enumerate}[label=\textnormal{(\alph*)}]
				\item If $\kappa$ is infinitely differentiable  in a neighborhood of $1$, then $X_{L}$ has all finite moments. The moments greater than the second grow polynomially in $L$.
				In particular, for $ n \geq 2 $, there exist constants $ c_n > 0 $ not depending on $L$ such that
				\begin{equation}\label{case2a} \E[X_L^{2n}] = L^{n-1} ( c_n +o(1))\quad \text{ as } L \to \infty \ .
				\end{equation}.

				\item If $\kappa$ is only $r$ times differentiable in a neighborhood of $1$, then $X_{L}$ has only $2r$ finite moments that grow as in~\eqref{case2a}.
			\end{enumerate}

			\smallskip

			\item {\bf High-disorder.} If $\kappa'(1)>1$, then

			$$
			\lim_{L\to \infty} \P(X_L = 0 ) < 1 $$
			and hence
			$$		 \liminf_{L\to \infty} \E [X_L] > 0 \ .
			$$
			Furthermore:
			\begin{enumerate}[label=\textnormal{(\alph*)}]
				\item If $\kappa$ is infinitely differentiable  in a neighborhood of $1$, then $X_L$ has all finite moments.
				These moments diverge exponentially as $L$ diverges, and in particular there exist constants $ c_n > 0 $ not depending on $L$ such that
				\begin{align}\label{case3a}
					\E[X_L^{2n}]=\kappa'(1)^{nL}(c_n+o(1))\quad \text{as }L\to\infty \ .
				\end{align}
				\item  If $\kappa$ is only $r$ times differentiable in a neighborhood of $1$, then $X_{L}$ has only $2r$ finite moments that grow as~in \eqref{case3a}.
			\end{enumerate}
		\end{itemize}
	\end{theorem}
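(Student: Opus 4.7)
The plan is to translate moment statements about $X_L$ into statements about derivatives of the iterated kernel $\kappa_L$ at $u=1$, and then to analyze those derivatives recursively through the compositional structure $\kappa_L = \kappa \circ \kappa_{L-1}$ together with $\kappa_{L-1}(1)=1$.

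\emph{Step 1, moment--derivative identity.} Iterating \eqref{geg_der} and using $G_{\ell-n,d+2n}(1)=1$, one computes
\begin{equation*}
G_{\ell,d}^{(n)}(1) \;=\; \frac{\ell(\ell-1)\cdots(\ell-n+1)\,(\ell+d-1)(\ell+d)\cdots(\ell+d+n-2)}{d(d+2)\cdots(d+2n-2)},
\end{equation*}
a polynomial in $\ell$ of degree $2n$ with positive leading coefficient. Differentiating \eqref{cov_gege} term by term gives $\kappa_L^{(n)}(1)=\E[p_n(X_L)]$ for a universal polynomial $p_n$ of degree $2n$, so $\E[X_L^{2n}]$ is a linear combination of the $\kappa_L^{(j)}(1)$ for $j\le n$ with dominant term a positive multiple of $\kappa_L^{(n)}(1)$. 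This already accounts for the regularity assertions (b) in all three regimes: if $\kappa\in C^r$ near $1$, then by the chain rule $\kappa_L^{(n)}(1)$ exists for $n\le r$ but not beyond, and correspondingly $X_L$ admits exactly $2r$ finite moments.

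\emph{Step 2, Faà di Bruno recursion.} Expanding $\kappa\circ\kappa_{L-1}$ at $u=1$ and isolating the single monomial in which all $n$ derivatives land on the inner function yields
\begin{equation*}
\kappa_L^{(n)}(1) \;=\; \kappa'(1)\,\kappa_{L-1}^{(n)}(1) \;+\; R_{n,L},
\end{equation*}
where $R_{n,L}$ is a polynomial in the $\kappa_{L-1}^{(j)}(1)$ for $j<n$ and the $\kappa^{(k)}(1)$ for $k\le n$, each monomial containing at least two factors of derivatives of $\kappa_{L-1}$. In particular, $\kappa_L'(1)=\kappa'(1)^L$ and $\kappa_L''(1)=\kappa'(1)\kappa_{L-1}''(1)+\kappa''(1)\kappa'(1)^{2(L-1)}$, and this affine recursion can be solved explicitly in each of the three regimes.

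\emph{Step 3, case analysis.} In the \textbf{low-disorder} case ($\kappa'(1)<1$), the linear part contracts; by induction on $n$ each monomial of $R_{n,L}$ carries at least two factors of size $O(\kappa'(1)^L)$, so $R_{n,L}=O(\kappa'(1)^{2L})$, and solving yields $\kappa_L^{(n)}(1)=O(\kappa'(1)^L)$, hence \eqref{case1a}. In the \textbf{sparse} case ($\kappa'(1)=1$), the recursion accumulates additively: for $n=2$ one gets $\kappa_L''(1)=L\,\kappa''(1)$ exactly, and by induction $\kappa_L^{(n)}(1)\sim c_n L^{n-1}$ for explicit $c_n$ built from $\kappa''(1),\ldots,\kappa^{(n)}(1)$, proving \eqref{case2a}. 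Moreover, $\kappa_L'(1)=1$ rewrites as the exact identity $\E[X_L^2]+(d-1)\E[X_L]=d$, which gives both bounds in \eqref{self-regu}; the stronger limit $\E[X_L]\to 0$ follows from Cauchy--Schwarz, $\E[X_L]^2\le \E[X_L^2]\,(1-D_{0;\kappa_L})$, combined with the pointwise convergence $\kappa_L(u)\to 1$ on $(-1,1]$, which by dominated convergence forces $D_{0;\kappa_L}=\int\kappa_L\,d\mu\to 1$ for the normalized Gegenbauer measure $d\mu\propto(1-u^2)^{d/2-1}du$. In the \textbf{high-disorder} case ($\kappa'(1)>1$), the same recursion has dominant balance $\kappa_L^{(n)}(1)\sim c_n\,\kappa'(1)^{nL}$ (checked directly for $n=2$ and propagated inductively from the term $\binom{n}{2}\kappa''(1)\,\kappa_{L-1}'(1)\,\kappa_{L-1}^{(n-1)}(1)$ in $R_{n,L}$), yielding \eqref{case3a}; finally, since $u=1$ is now a repelling fixed point of $\kappa:[-1,1]\to[-1,1]$, the iterates $\kappa_L$ stay bounded away from the constant $1$, so $D_{0;\kappa_L}=\int\kappa_L\,d\mu$ remains bounded away from $1$, giving $\lim\P(X_L=0)<1$ and therefore $\liminf\E[X_L]\ge\liminf\P(X_L\ge 1)>0$.

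The Faà di Bruno bookkeeping is entirely routine; the genuinely delicate ingredients are the two dynamical statements for the one-dimensional map $u\mapsto\kappa(u)$ on $[-1,1]$: pointwise convergence $\kappa_L\to 1$ in the sparse regime, and non-convergence (attraction to a distinct fixed point) in the high-disorder regime. Both should be handled by a direct phase-line analysis of $\kappa$, exploiting $\kappa(1)=1$, the sign of $\kappa'(1)-1$, and the uniform bound $|\kappa_L|\le 1$ inherited from variance normalization.
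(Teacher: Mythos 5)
Your proposal follows essentially the same route as the paper: \Cref{mat_momenti}'s link between $\kappa_L^{(n)}(1)$ and $\E[p_n(X_L)]$ via Gegenbauer differentiation, the Fa\`a di Bruno recursion for the iterated kernel derivatives (\Cref{new_ex_kn,new_k1}), and the fixed-point/dominated-convergence analysis of $D_{0;\kappa_L}$ for the first-moment claims. The only (harmless) deviation is your Cauchy--Schwarz bound $\E[X_L]\le \E[X_L^2]^{1/2}(1-D_{0;\kappa_L})^{1/2}$ in the sparse case, where the paper instead invokes uniform integrability; both arguments are valid.
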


	The results of \Cref{main_theorem_1} imply the asymptotics schematized in \Cref{tab:P-lim-L2-lim}.

	\begin{table}[h!]
		\centering
		\caption{ Limits of $X_L$ as $ L \to \infty$ in the low-disorder, sparse and high-disorder regimes. $X_L$ does not have a limit in $L^2$ in the sparse and high-disorder regime. In the high-disorder regime,  $X_L$ does not converge to $0$ in probability (but it could converge to some random variable). }
		\begin{tabular}{l | c c}
			& $\P$-$\lim$ & $L^2$-$\lim$ \\
			\hline
			$\kappa'(1)<1$ & $= 0$ & $= 0$ \\
			$\kappa'(1)=1$ & $= 0$ & $\nexists$ \\
			$\kappa'(1)>1$ & $\neq 0$ & $\nexists$
		\end{tabular}

		\label{tab:P-lim-L2-lim}
	\end{table}

	\begin{remark}
		Some bounds in~\Cref{main_theorem_1} are expressed only for even moments.
		Since the random variables $X_L$ are non-negative, inequalities for the odd moments can be derived using the H\"older inequality, namely
		$
		\E[X_L^{2n+1}] \leq \E[X_L^{2n+2}]^{(2n+1)/(2n+2)}
		$.
	\end{remark}
	Our second main result complements \Cref{main_theorem_1} by characterizing the asymptotics of the  sequence of fields $(T_L)_{L \in \N}$ and their derivatives $(T_L^{(r)})_{L\in\N}$, where
	$ T_L^{(r)}(x) = (-\Delta_{\S^d})^{r/2} T(x)  $.

	\begin{theorem}\label{main_theorem_3} Under the same notation and conditions as in  \Cref{main_theorem_1}, we have the following.
		\begin{itemize}[leftmargin=*]
			\item {\bf Low-disorder.}  If $\kappa'(1)<1$, then for all $x\in \S^d$ we have
			\begin{equation}\label{convL2}
				\lim_{L \to \infty }  T_L(x) - T_L({\rm N}) = 0  \quad \text{ in } L^2(\P) \ ,
			\end{equation}
			where ${\rm N} = ( 0, \cdots, 0,1)$ is the north pole.
			Moreover, if $\kappa$ is $s$-times differentiable in a neighborhood of $1$, then, for every $ r = 1 , \dots , s $ and all $x\in \S^d$,
			$$\Var\tonde{T_L^{(r)}(x)}= o(\kappa'(1)^L) \quad  \text{ as }  L \to \infty $$
			and in particular $T_L^{(r)}(x) \to 0 $ in $L^2(\P)$.
			\item {\bf Sparse.} If $\kappa'(1) = 1$, then~\eqref{convL2} holds. Moreover, if $\kappa$ is $s$-times differentiable in a neighborhood of $1$, then, for every $ r =1, \dots,  s$ there exists a constant $ c_r > 0 $ such that, for all $x\in \S^d$,
			$$ \Var \big(T_L^{(r)}(x)\big) = L^{r-1} (c_r + o(1)) \quad \text{ as } L \to \infty$$
			and in particular $ T_L^{(r)}(x) \not\to 0 $ in $ L^2(\P) $.
			\item {\bf High-disorder.} If $\kappa'(1)>1$, then~\eqref{convL2} does not hold. Moreover, if $\kappa$ is $s$-times differentiable in a neighborhood of $1$, then, for every  $ r =1, \dots,  s$ there exists a constant $ c_r > 0 $ such that, for all $x\in \S^d$,
			$$\Var\big(T_L^{(r)}(x)\big) = \kappa'(1) ^{rL}(c_r + o(1)) \quad \text{ as } L \to \infty $$
			and in particular $T_L^{(r)}(x) \not\to 0 $ in $ L^2(\P) $.
		\end{itemize}
	\end{theorem}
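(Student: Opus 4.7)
The plan is to reduce both conclusions of \Cref{main_theorem_3} to objects that \Cref{main_theorem_1} already controls, namely the moments of $X_L$ and the iterated covariance $\kappa_L$. The reduction rests on two identities that I would derive first. On the one hand, inserting the expansion \eqref{espansione} into the definition of $T_L^{(r)}$, using the Laplace--Beltrami eigenequation \eqref{laplacian} and the addition formula $\sum_{m=1}^{n_{\ell,d}} Y_{\ell m}(x)^2 = n_{\ell,d}/\omega_d$, together with the identification $C_\ell\, n_{\ell,d}/\omega_d = D_{\ell;\kappa_L}$ coming from the comparison of \eqref{covariance} and \eqref{cov_gege}, gives the variance identity
\begin{equation*}
\Var\tonde{T_L^{(r)}(x)} \;=\; \sum_{\ell=0}^\infty \bigl[\ell(\ell+d-1)\bigr]^r D_{\ell;\kappa_L} \;=\; \E\!\quadre{\bigl(X_L(X_L+d-1)\bigr)^r}.
\end{equation*}
On the other hand, isotropy of $T_L$ and the normalization $\kappa_L(1)=1$ yield
\begin{equation*}
\E\!\quadre{\bigl(T_L(x)-T_L({\rm N})\bigr)^2} \;=\; 2\bigl(1-\kappa_L(\langle x,{\rm N}\rangle)\bigr),
\end{equation*}
so that the $L^2$-convergence statement \eqref{convL2} is equivalent to $\kappa_L(u)\to 1$ for every $u\in[-1,1]$.

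Next I would process each regime in turn. For the variance of $T_L^{(r)}(x)$, expanding $\bigl(X_L(X_L+d-1)\bigr)^r$ as a polynomial in $X_L$ of degree $2r$ with leading term $X_L^{2r}$, one reads off the asymptotic order by plugging in \eqref{case1a}, \eqref{case2a}, \eqref{case3a} respectively: the low-disorder case gives the exponential decay, the sparse case gives the $L^{r-1}$ polynomial growth, and the high-disorder case gives the $\kappa'(1)^{rL}$ exponential explosion. The regularity constraint $r\leq s$ enters here, as differentiability of $\kappa$ at $1$ controls which moments of $X_L$ are finite. For \eqref{convL2} I would run the standard fixed-point analysis of the scalar iteration $u\mapsto \kappa(u)$ near $u=1$. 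Since $\kappa$ is a correlation function one has $\kappa([-1,1])\subseteq[-1,1]$ with $\kappa(1)=1$, and the sign of $\kappa'(1)-1$ controls the local dynamics: if $\kappa'(1)<1$, a first-order Taylor expansion yields $1-\kappa_L(u) = \kappa'(1)^L(1-u)(1+o(1))$, hence $\kappa_L\to 1$ exponentially; if $\kappa'(1)=1$, a second-order expansion $1-\kappa(u)=(1-u)-\tfrac{\kappa''(1)}{2}(1-u)^2+O((1-u)^3)$ combined with a standard induction gives $1-\kappa_L(u)\sim 2/(\kappa''(1)L)$, hence $\kappa_L\to 1$ at rate $1/L$; if $\kappa'(1)>1$, the fixed point $1$ becomes repelling and $\kappa_L(u)$ is pushed to another fixed point strictly below $1$, so \eqref{convL2} fails.

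The hard part, and the step where I expect most of the care to go, is the sparse regime. Both the $\kappa'(1)=1$ fixed-point analysis and the polynomial growth $\Var(T_L^{(r)}(x))\sim c_r L^{r-1}$ sit at the edge between the two exponential regimes, and the constants $c_r$ must be identified by the same type of inductive argument used to prove \eqref{case2a} in \Cref{main_theorem_1}. In this regime neither side of the variance identity is asymptotically dominated by a single Taylor term, so one has to propagate precise second-order information through the $L$ iterations of $\kappa$ and combine it with the tail bounds on the spectral mass coming from the proof of \Cref{main_theorem_1}. The low-disorder and high-disorder cases are, by contrast, essentially immediate corollaries of the exponential contraction/expansion near $u=1$ together with the moment estimates already provided by \Cref{main_theorem_1}.
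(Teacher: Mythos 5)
Your reduction is the same as the paper's: you derive the identity $\Var\tonde{T_L^{(r)}(x)}=\E\quadre{\tonde{X_L(X_L+d-1)}^r}$ from the harmonic expansion, the eigenvalue relation \eqref{laplacian} and the addition formula, and then read off the three asymptotic regimes from the moment estimates of \Cref{main_theorem_1}; this is exactly how the paper proves the derivative claims, and the high-disorder counterexample to \eqref{convL2} via the sub-unit fixed point $u_1^\star$ is also the paper's argument. Where you diverge is the proof of \eqref{convL2} itself: the paper does not rerun a Taylor/fixed-point analysis of the iteration but simply observes that $\E[X_L]\to 0$ (already established in \Cref{main_theorem_1} for $\kappa'(1)\le 1$) forces $D_{0;\kappa_L}\to 1$ and hence $\sum_{\ell\ge 1}D_{\ell;\kappa_L}\to 0$, which gives $\kappa_L(\langle x,{\rm N}\rangle)\to 1$ uniformly since $|G_{\ell,d}|\le 1$; this route needs only $\kappa\in C^1$. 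Your dynamical argument reaches the same conclusion, but the precise sparse-case rate you assert, $1-\kappa_L(u)\sim 2/(\kappa''(1)L)$, relies on a second-order expansion of $\kappa$ at $1$ that is not available under the stated hypotheses and in fact fails for the flagship example: for ReLU one has $\kappa''(u)=\tfrac{1}{\pi\sqrt{1-u^2}}\to+\infty$ as $u\to 1$, so $\kappa''(1)$ does not exist. This is a local, repairable slip -- only $\kappa_L(u)\to 1$ is needed, and that follows from monotone convergence to the attracting fixed point $1$ (or from the paper's first-moment argument) without any rate -- but as written that step would not go through. A second, smaller point, which the paper shares: since the odd moments are only bounded as $\E[X_L^{2r-1}]=O(L^{r-1})$ in the sparse regime, the cross terms in the expansion of $\tonde{X_L(X_L+d-1)}^r$ are of the same order as the leading term, so identifying a genuine limit constant $c_r$ (rather than just two-sided bounds of order $L^{r-1}$) requires an extra argument.
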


	\begin{remark}\label{rmk:sobolev}
		\Cref{main_theorem_3} provides information on the convergence of random neural networks in different Sobolev norms as the depth grows.
		In the low-disorder case, the random field converges to a random constant in $L^2$ and in all Sobolev spaces of corresponding smoothness.
		In the sparse case, the random field converges to a random constant in $L^2$ sense, but not in Sobolev norms.
		In the high-order case, we lose convergence in $L^2$ and all Sobolev norms.
	\end{remark}

	To provide examples, in \Cref{tab::activation} we classify commonly adopted activation functions into one of the three regimes introduced in \Cref{main_theorem_1}; see~\cite{SimmacoThesis} for the explicit computations of $\kappa'(1)$.

	\begin{table}[!h]
				\caption{{Classification of standard activation functions into the three regimes of \Cref{main_theorem_1} for $\Gamma_b = 0$.
				The characters L, S and H stand for Low-disorder, Sparse and High-disorder, respectively.
				$ a_0 \in (1.09,1.1) $ is the positive root of $a^2 \tanh(a^2)-1$. $\Phi(x)$ is the cumulative distribution function of the standard normal distribution.}}
		\renewcommand{\arraystretch}{1.25}
		\centering
		\begin{tabular}{ll|lll}
			{\bf Activation} & $\pmb{ \sigma(x)}$ & \multicolumn{3}{c}{\bf Regime} \\
			\hline
			%Logistic & $\frac{1}{1+e^{-x}}$& L \\
			GELU & $x\Phi(x)$ & L \\
			ReLU & $x\mathbbm I_{x\ge0}$ & & S \\
			LReLU & $x \mathbbm I_{x\ge0}+ 0.01x  \mathbbm I_{x<0}$ & & S \\
			PReLU & $x \mathbbm I_{x\ge0}+ ax  \mathbbm 1_{x<0}$ & & S \\
			RePU $(p\ge2)$& $x^p\mathbbm I _{x\ge0}$ & & & H \\
			Hyperbolic tangent & $\tanh(x)$ & & & H \\
			Normal cdf & $\Phi(x)$ & & & H \\
			Exponential & $e^{ax}$ & L $|a|<1$ & S $|a|=1$ & H $|a|>1$ \\
			Gaussian & $e^{-ax^2/2}$ &  L $a^2< 1\!\!+\!\!\sqrt{2}$ & S $a^2 =1\!\!+\!\!\sqrt{2}$ & H $a^2>1\!\!+\!\!\sqrt{2} $ \\
			Cosine & $\cos(a x)$ & L $|a|<a_0$ & S $|a| = a_0 $ & H $|a|>a_0$
		\end{tabular}
		\label{tab::activation}
	\end{table}

	For isotropic kernels, it is well-known that the corresponding reproducing kernel Hilbert spaces are equivalent to Sobolev spaces, with associated norm penalized by the angular power spectrum \cite{10.1214/14-AAP1067}. This justifies viewing general RKHS as generalized Sobolev spaces.
	In this interpretation, our results suggest that, while Sobolev norms can be blind to depth \cite{bietti2021deep}, the full angular power spectrum itself captures important information,
	and can thus be used to define a proper notion of depth-dependent complexity.
	This is the goal of the next definition.

	%The previous theorem has led us to introduce a new measure of network complexity, which can be defined for any architecture (with a generic activation function and a finite number of neurons).

	\begin{definition}\label{defi::spectral}\emph{(Spectral effective support and spectral effective dimension)}
		Let $T_L:\,\R^d\to\R$ be a random neural network defined as in \eqref{random_field}, and let $X_L$ be the associated random variable as in \eqref{associated_rv}. The spectral $\alpha$-effective support ($\alpha\in (0,1)$) of $T_L$ is
		$$
		\mathcal C_\alpha = \min\Big\{ n \in \N \; \Big| \; \sum_{\ell=0}^ n \P(X_{L}  =\ell) \geq  1-\alpha \Big\}  \,.
		$$
		The spectral $\alpha$-effective dimension of $T_L$ is
		$$ \mathcal D_\alpha = \sum_{\ell=0}^{C_\alpha} n_{\ell,d}$$
		where $n_{\ell,d}$ is defined in  \eqref{dim_spazio}.
	\end{definition}

	\begin{remark}[Interpretation of spectral support and dimension]
		Heuristically, if a neural network with a given architecture has a spectral $\alpha$-effective support equal to $\mathcal C_{\alpha}$, then $\mathcal C_{\alpha}$ multipoles (equivalently, harmonic frequencies) are sufficient to explain $(1-\alpha)\%$ of its random norm (i.e.~variance), and hence the corresponding random field is ``close'' (in the $L^2$ sense) to a polynomial of degree $\mathcal C_\alpha$. The dimension of the corresponding vector spaces depends also on the dimension of the domain $\mathbb{S}^d$, and it grows in general as $\mathcal D_{\alpha} \simeq \mathcal C_{\alpha}^{d-1}$. Some numerical evidence to support these claims is given below, see~\Cref{tab:comp_ReLu}.
	\end{remark}

	\begin{remark}[Spectral complexity] \label{rmk:spectral_complexity}
		In view of \Cref{defi::spectral}, we can reinterpret \Cref{main_theorem_1} as a classification in three distinct complexity regimes.
		Indeed, using the Markov inequality we can obtain bounds $ \mathcal{C}_\alpha \lesssim ( \kappa'(1)^L / \alpha )^{1/2} $.
		Thus, in the low-disorder case, the complexity decays exponentially fast in $L$.
		In the sparse case, the complexity is bounded from above uniformly over $L$.
		In the high-disorder case, we obtain an upper bound that diverges exponentially in $L$;
		while we do not have a matching lower bound, numerical simulations seem to confirm a fast growth (see \Cref{tab:comp_ReLu}).

		To provide a concrete example in the sparse regime,
		one can see that, for $d=2$, more than $0.98 \geq \mathbb{E}[X_L^2]/10^2$ of the probability mass is (uniformly in $L$) in the first $10$ multipoles (i.e. $ \mathcal{C}_{0.02} \le 10 $),
		which span a vector space of dimension $121$ (i.e. $ \mathcal{D}_{0.02} \leq 121 $). This is only an upper bound, and indeed the simulations in~\Cref{tab:comp_ReLu} show for ReLU that up to $99\%$ of the probability mass is confined in the first two multipoles as $L$ increases. This is rather surprising, keeping in mind that, for large $L$, the neural network could have millions, if not billions, of parameters.
	\end{remark}

	\begin{remark}[Spikes in ReLU networks]\label{spikes}
		A comprehensive inspection of the previous results and definitions highlights some peculiarities of the ReLU activation, which may perhaps explain some of its empirical success.
		What observed in \Cref{rmk:sobolev} can be explained by spikes emerging at higher and higher frequencies as the depth increases.
		Because such spikes have asymptotically negligible support, they have limited impact in $L^2$ norm, but they crucially increase the Sobolev energy, resulting in the field derivatives not converging to $0$.
		Further evidence of this fact is given by \Cref{rmk:spectral_complexity}.
		On the one hand, the spectral effective dimension remains bounded at any depth $L$,
		which suggests that ReLU networks are well approximated in $L^2$ by polynomials of low degree, and therefore are less prone to overfitting.
		On the other hand, the divergence of ReLU moments of order larger than $2$ indicates the existence of sparse components at high frequencies, which explains the approximation capabilities of ReLU networks at large depth. Our findings are consistent with deterministic characterizations of the expressivity of ReLU networks, given for instance in \cite{pmlr-v97-hanin19a,HaninRolnick2019b,DaubechiesHanin2022}, and references therein.
		%Note that the $L^2$ approximation for ReLU networks with finite-dimensional vector spaces does NOT hold in Sobolev norms, where any given degree of approximation does indeed require a growing number of multipoles as the depth increases.
	\end{remark}

	\subsection{Idea of the proofs}
	The proofs of \Cref{main_theorem_1,main_theorem_3} are rather technical and are therefore postponed to \Cref{app:proofs}.
	We convey here the main ideas.
	We start from two alternative representations of the covariance function.
	The first one, given in \eqref{cov_ric}, uses the recursive nature of the covariance in the infinite width limit; the second one, given in \eqref{cov_gege}, is the expansion in Gegenbauer polynomials.
	We then compute higher-order derivatives of these covariances in both representations. For \eqref{cov_gege}, this amounts to exploiting explicit formulae for the derivatives of the Gegenbauer functions, and leads to power series involving the spectral weights times powers of the multipoles (see~\Cref{mat_momenti}); for \eqref{cov_ric}, one uses the Faa' di Bruno formula and gets some finite-difference equation, which can be solved by induction and ad-hoc tricks (see~\Cref{new_ex_kn} and~\Cref{new_k1}). Equating the two sets of expressions, further computation provides explicit upper and lower bounds on the spectral coefficients. Since these coefficients are normalized to represent a discrete probability mass function, their asymptotic behavior immediately gives way to a number of concentration results, which in turn lead to~\Cref{main_theorem_1,main_theorem_3}.

	\section{Numerical evidence}\label{sec::numerical}
	We now present numerical experiments to illustrate and support our theoretical results; the corresponding code
	is publicly available at \url{https://github.com/simmaco99/SpectralComplexity}
	and it is based on the HealPIX package for spherical data analysis \cite{gorski2005healpix}.
	In all the following, we consider the standard initialization without bias, i.e.~$ \Gamma_b = 0 $.

	Our first goal is to visualize the different role played by the network depth in the three scenarios considered above in \Cref{main_theorem_1}. In particular, \Cref{fig:3casi,fig:3casi_cont} show random neural networks generated by a Monte Carlo estimation of the angular spectrum, for different activation functions in the three classes (low-disorder, sparse, high-disorder) and growing $L=1, 20, 40, 60, 80$.
	We focus on the following activation functions.
	\begin{itemize}[leftmargin=*]
		\item (Low-disorder case)
		We consider the Gaussian activation function
		$$ \sigma_1(x) = e^{-x^2/2} \ . $$
		By~\Cref{lemm::gaussian}, the normalized covariance kernel is given by
		$$  \kappa(u) =  \sqrt{\frac{3}{4 -u^2}} \ .$$
		A trivial calculation shows that $\kappa'(1)<1.$
		The simulations reported in the first column of \Cref{fig:3casi} are consistent with our results: starting from $L$ of order $20$ we obtain random fields which are very close to constant over the whole sphere. The analogous pictures for higher values of $L$ are not reported in \Cref{fig:3casi_cont}, because they are basically identical.
		\item (Sparse case) Here we take the ReLU activation function
		$$ \sigma_2(x) = \max(0,x)\ .$$
		It is known (see for instance~\cite{cho2011analysis}) that the normalized associated kernel is given by
		$$ \kappa(u) =\frac{ 1}{\pi } \tonde{ u\big( \pi - \arccos(u)\big) + \sqrt{1  -u^2}}\ .$$
		A simple calculation shows that $\kappa'(1) = 1$. The pictures in the middle column of \Cref{fig:3casi} and in the left column of \Cref{fig:3casi_cont} confirm what is expected: the corresponding random field converges to Gaussian realizations with an angular power spectrum dominated by very few low-frequency multipoles, as made evident by the presence of large scale fluctuations at every depth $L$.
		%More transparent numerical evidence is given in the Table~\Cref{tab:comp_ReLu} and Figure~\Cref{}.
		\item (High-disorder case) Let us consider the activation function
		$$ \sigma_3(x) = \tanh(x)\ .$$
		To the best of our knowledge, the associated kernel $\kappa$ is not known analytically. However, \Cref{der_sigmoide} shows that $ \kappa'(1) > 1 $.
		In view of \Cref{main_theorem_1,main_theorem_3}, we know that the corresponding random fields have angular power spectra that diverge to infinity exponentially as the depth increases. Because of this, we expect more and more ``wiggly'' realizations at larger depths; this is confirmed by the plots on the right columns of~\Cref{fig:3casi,fig:3casi_cont}.
		The comparison between ReLU and $\tanh$ realizations are -- we believe -- extremely illuminating.
	\end{itemize}

	To quantify the fluctuations of the fields,
	we computed average minima and maxima through Monte Carlo realizations; the results are reported in~\Cref{tab::max_min}.
	Moreover, we estimate the numerical values of the angular power spectra under different architectures by a Monte Carlo simulation: as expected, while for ReLU the power spectrum concentrates on the very first multipoles as $L$ increases, in the $\tanh$ case it concentrates on higher and higher frequencies.
	Finally, we provide some numerical evidence for our proposed notions of spectral effective support and dimension. In particular, \Cref{effsupprelu} shows that, for ReLU, $99\%$ and more of the angular power spectrum concentrates on very few low multipoles, in the order of 1 or 2; correspondingly, \Cref{effdimrelu} shows that, in the $L^2$ sense, the random neural network is very well approximated by a polynomial function belonging to a vector space of dimension 9 (for $L=60$) or even $4$ (for $L=80$). On the other hand, in the case of $\tanh$, it takes several hundreds of multipoles to achieve a similar $L^2$ approximation in the order of $99\%$ (see the last column of \Cref{effsupptanh}). Correspondingly, it takes a vector space of dimension $10^6$ or larger for a suitable approximation, showing a non-sparse behavior (see the last column of \Cref{effdimtanh}).

	\begin{figure}[!htbp]
		\centering
		\subfloat[$1$ hidden layer]{
			\includegraphics[width = 0.3\textwidth]{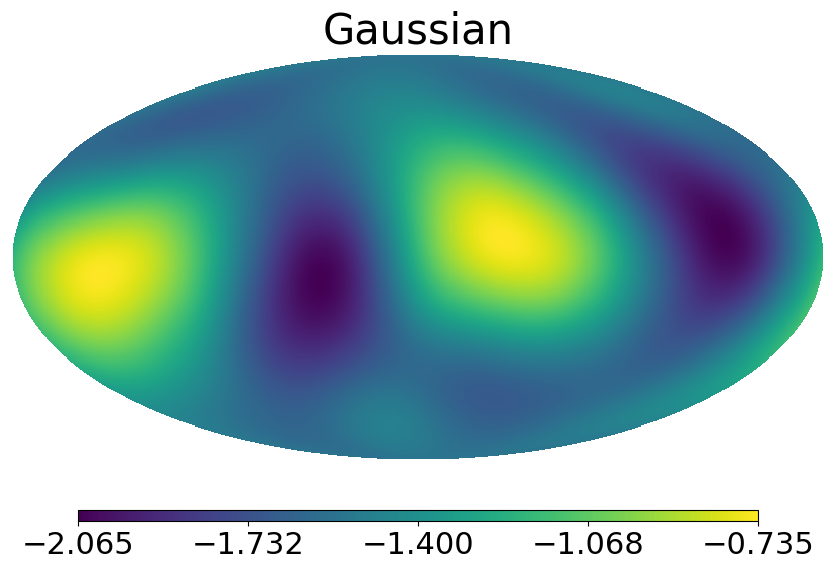}
			\includegraphics[width = 0.3\textwidth]{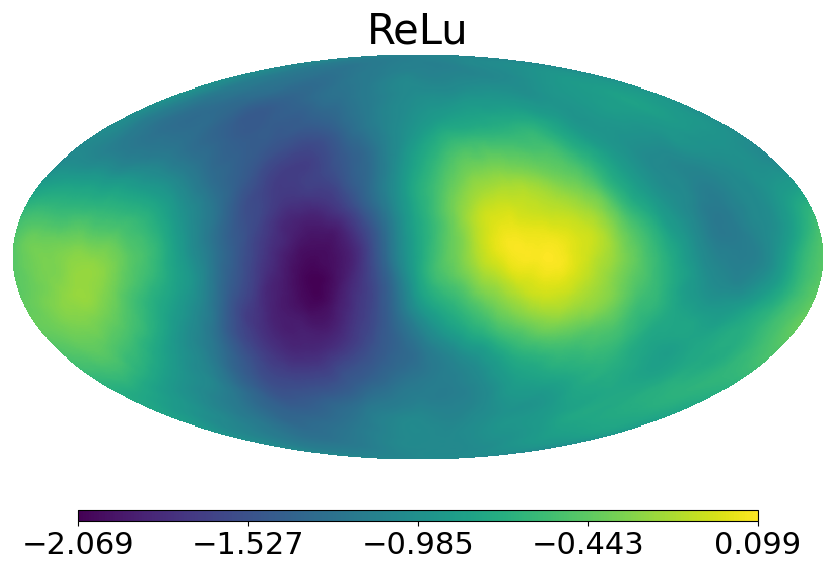}
			\includegraphics[width = 0.3\textwidth]{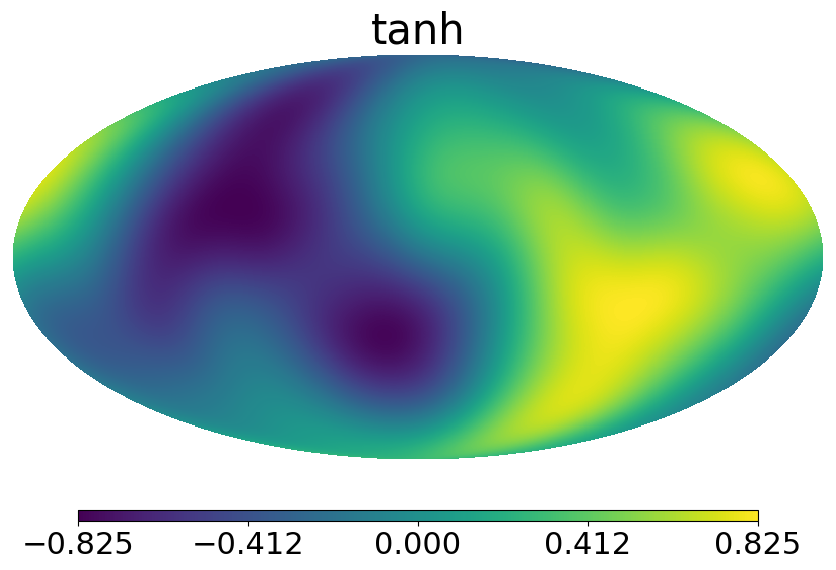}
		}\hfill
		\subfloat[$20$ hidden layers]{
			\includegraphics[width = 0.3\textwidth]{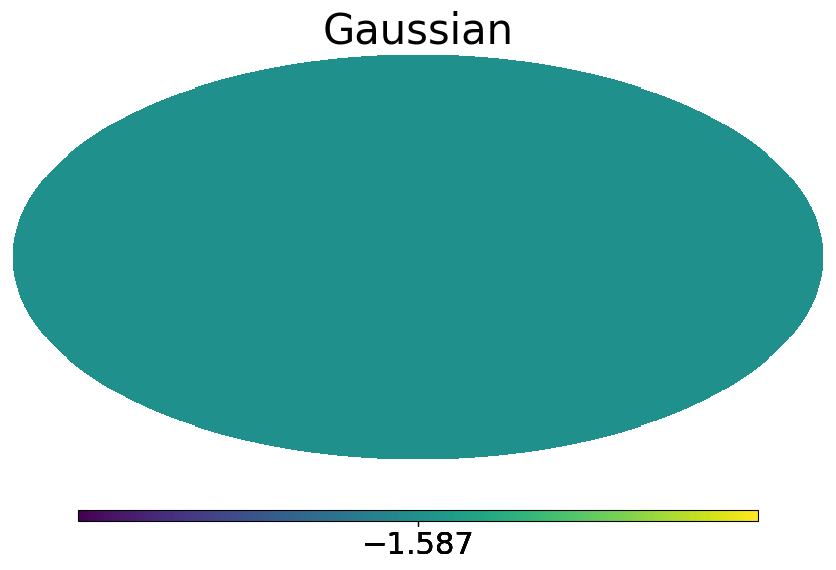}
			\includegraphics[width = 0.3\textwidth]{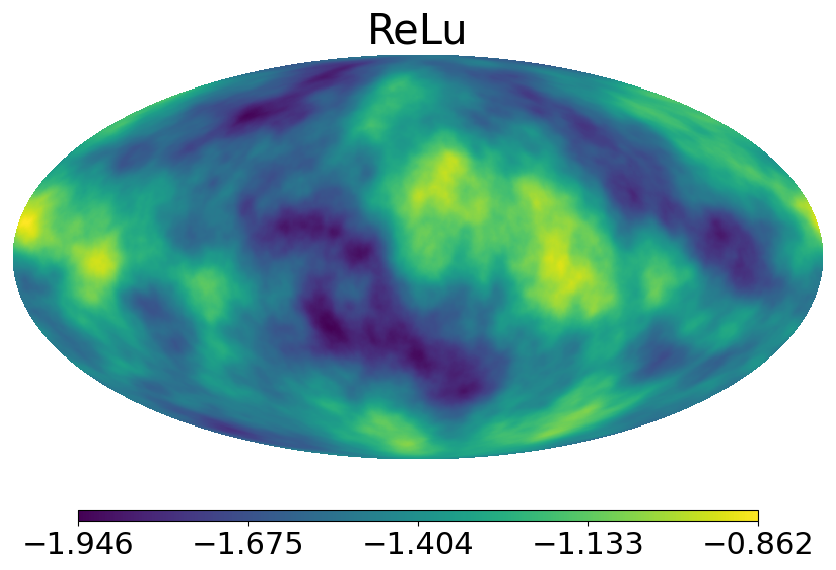}
			\includegraphics[width = 0.3\textwidth]{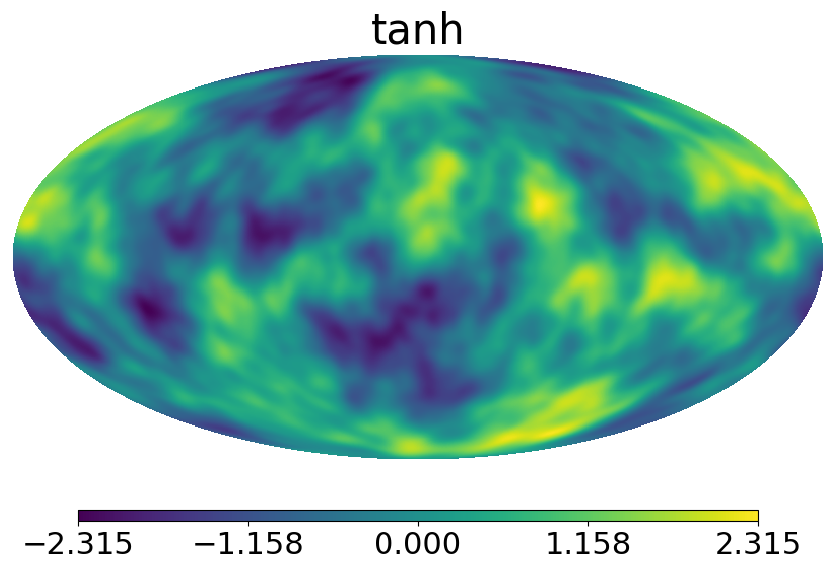}
		}
		\caption{Mollweide projection of a random neural network  $T_L: S^2 \to \mathbb{R}$ with varying depth $L=1,20$ (from top to bottom). The activation functions are $\sigma_1(x) = e^{-x^2/2}$, $ \sigma_2(x) = \max(0,x)$ and  $\sigma_3 = \tanh(x)$ (from left to right). The size of hidden layers is fixed at $1000$ neurons and the resolution of the map is $0.11$ deg. The fields were obtained estimating the angular spectrum by a Monte Carlo estimation (1000 samples) and drawing one realization of the random spherical harmonic coefficients. Note that the color ranges are different from plot to plot.
			See also~\Cref{tab::max_min}, which displays the range of values assumed by the fields; in the plot, the values of the field are approximated to the 3rd decimal digit.
		}
		\label{fig:3casi}
	\end{figure}

	\begin{figure}[!htbp]
		\centering
		\subfloat[$40$ hidden layers]{
			\includegraphics[width = 0.45\textwidth]{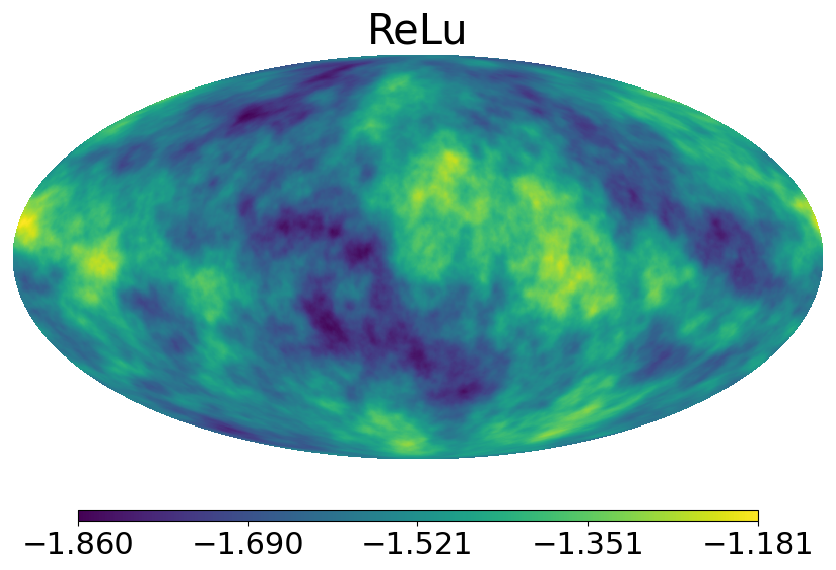}
			\includegraphics[width = 0.45\textwidth]{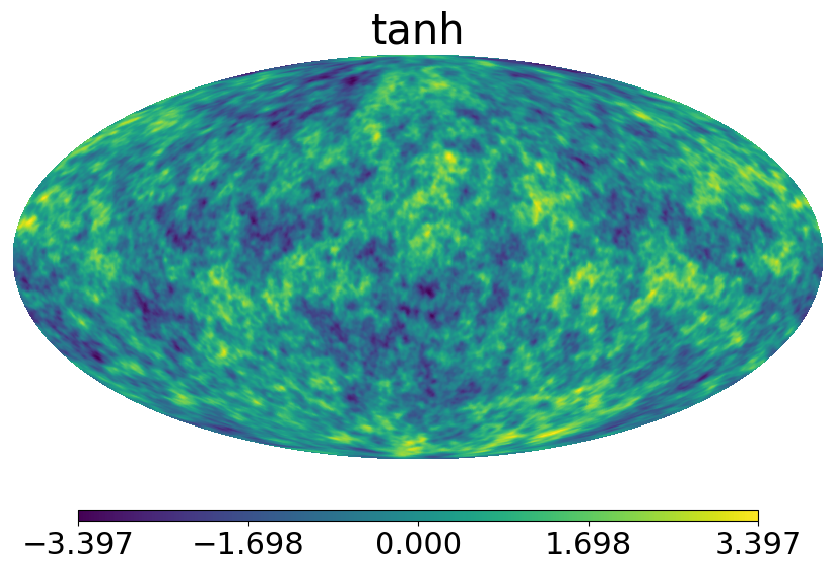}
		}\hfill
		\subfloat[$60$ hidden layers]{
			\includegraphics[width = 0.45\textwidth]{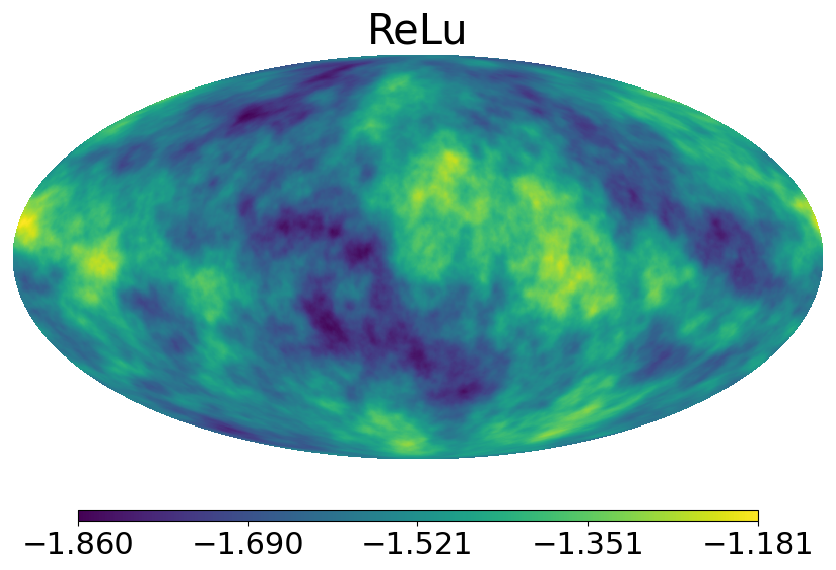}
			\includegraphics[width = 0.45\textwidth]{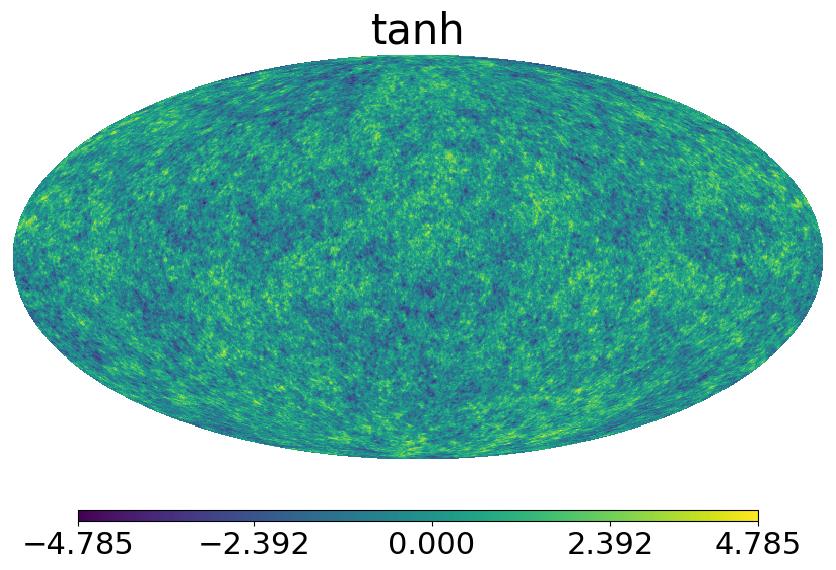}
		}\hfill
		\subfloat[$80$ hidden layers]{
			\includegraphics[width = 0.45\textwidth]{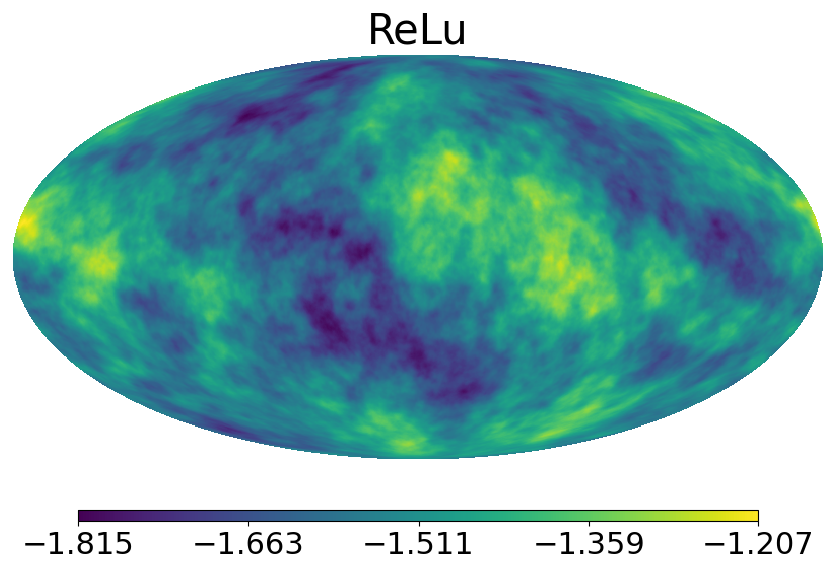}
			\includegraphics[width = 0.45\textwidth]{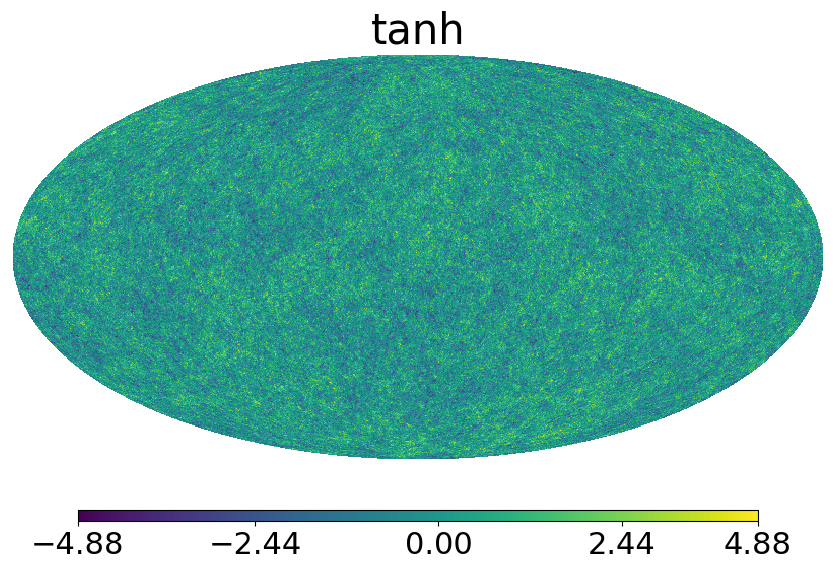}
		}
		\caption{Same as~\Cref{fig:3casi}: ReLU on the left, tanh on the right. The maps corresponding to Gaussian activations are dropped because they are approximately constant on the sphere.}
		\label{fig:3casi_cont}
	\end{figure}

	\begin{table}[!h]
		\caption{The maximum and minimum values of the field with different activation functions; the results were obtained computing the means of $1000$ Monte Carlo realizations.}
		\centering
		\begin{tabular}{c|ll|ll|ll}
			\multirow{2}{*}{\bf depth} & \multicolumn{2}{c|}{\bf Gaussian} & \multicolumn{2}{c|}{\bf ReLU} & \multicolumn{2}{c}{\bf tanh } \\
			& \multicolumn{1}{c}{min} & \multicolumn{1}{c|}{max} & \multicolumn{1}{c}{min} & \multicolumn{1}{c|}{max} & \multicolumn{1}{c}{min} & \multicolumn{1}{c}{max} \\
			1 &  $ -0.55360 $ & $ \phantom{-}0.64094 $ & $ -1.46433 $ & $ 1.53117 $ & $ -1.73473 $ & $ 1.73473 $\\
			20 & $ \phantom{-}0.00013 $ & $ \phantom{-}0.00016 $&$ -0.57855 $ & $ 0.69359 $ & $ -2.92493 $ & $ 2.92493 $\\
			40 & $ \phantom{-}0.07904 $ & $ \phantom{-}0.07904 $& $ -0.42578 $ & $ 0.46043 $ & $ -3.92830 $ & $ 3.92830 $\\
			60 & $ -0.00720 $ & $ -0.00720 $ & $ -0.43158 $ & $ 0.29306 $ & $ -4.65288 $ & $ 4.65288 $\\
			80 & $ \phantom{-}0.00992 $ & $ \phantom{-}0.00993 $ & $ -0.37657 $ & $ 0.26124 $ & $ -4.88463 $ & $ 4.88463 $
		\end{tabular}
		\label{tab::max_min}
	\end{table}
	\newpage
	\begin{table}[!htbp]
		\caption{Spectral $\alpha$-effective supports and dimensions of ReLu and $\tanh$ networks for  $\alpha=0.01,0.005,0.001,0.0005,0.0001$ and $L=1,20,40,60,80$. These values are computed by means of $200$ Monte Carlo realizations, the width of hidden layers is fixed at $500$ neurons and the resolution of the map is $0.11$ deg.}
		\centering
		\subfloat[spectral effective support (ReLU)]{
			\begin{tabularx}{.98\textwidth}{c|YYYYYY}
				\bf depth  & \bf 0.01    & \bf 0.005   & \bf 0.001    & \bf 0.0005   & \bf 0.0001 \\
				\hline
				1     & $2$       & $4$       & $6$       & $8$       & $14$\\
				20    & $4$       & $7$       & $15$      & $20$      & $36$\\
				40    & $3$       & $7$       & $18$      & $25$      & $48$\\
				60    & $2$       & $4$       & $17$      & $25$      & $52$\\
				80    & $1$       & $4$       & $18$      & $27$      & $59$\\
			\end{tabularx}\label{effsupprelu}}
		\vfill
		\subfloat[spectral effective dimension (ReLU)]{
			\begin{tabularx}{.98\textwidth}{c|YYYYYY}
				\bf depth  & \bf 0.01    & \bf 0.005   & \bf 0.001    & \bf 0.0005   & \bf 0.0001 \\
				\hline
				$1$     & $9$       & $25$       & $49$       & $81$        & $225$\\
				$20$    & $25$      & $64$       & $256$      & $441$       & $1369$\\
				$40$    & $16$      & $64$       & $361$      & $676$       & $2401$\\
				$60$    & $9$       & $25$       & $324$      & $676$       & $2809$\\
				$80$    & $4$       & $25$       & $361$      & $784$       & $3600$\\
			\end{tabularx}\label{effdimrelu}}
		\vfill
		\subfloat[spectral effective support ($\tanh$)]{
			\begin{tabularx}{.98\textwidth}{c|YYYYYY}
				\bf depth   & \bf 0.5    & \bf 0.4   & \bf 0.3    & \bf 0.2  & \bf 0.1 & \bf 0.01 \\
				\hline
				$1$     & $1$       & $1$     & $1$      & $1$   & $1$    & $3$\\
				$20$    & $3$       & $3$     & $5$      & $7$   & $9$    & $15$\\
				$40$    & $15$      & $21$    & $29$     & $39$  & $59$    & $135$\\
				$60$    & $77$      & $103$   & $141$    & $195$ & $287$    & $547$\\
				$80$    & $399$     & $541$   & $727$    & $981$ & $1339$    & $>1537$\\
			\end{tabularx}\label{effsupptanh}}
		\vfill
		\subfloat[spectral effective dimension ($\tanh$)]{
			\begin{tabularx}{.98\textwidth}{c|YYYYYY}
				\bf depth   & \bf 0.5    & \bf 0.4   & \bf 0.3    & \bf 0.2  & \bf 0.1 & \bf 0.01 \\
				\hline
				$1$     & $4$       & $4$       & $4$      & $4$      & $4$    & $16$\\
				$20$    & $16$      & $16$      & $36$     & $64$     & $100$    & $256$\\
				$40$    & $256$     & $484$     & $900$    & $1600$   & $3600$    & $18496$\\
				$60$    & $6084$    & $10816$   & $20164$  & $38416$  & $82944$    & $300304$\\
				$80$    & $16000$   & $293764$  & $529984$ & $964324$ & $1795600$    & $>\!\!2.36\!\cdot\!10^6$\\
			\end{tabularx}\label{effdimtanh}
		}
		\label{tab:comp_ReLu}

	\end{table}

	\section{Conclusions and future work}

	In this paper we have introduced a new spectral framework to study the complexity of neural networks with respect to their depth.
	Our results show that increasing the depth can have very different effects depending on the choice of the activation function.
	More precisely,
	we have identified three classes of activations,
	leading to regimes of degeneracy, sparsity, or instability.
	Notably, the ReLU activation falls in the sparse regime,
	suggesting its ability to produce deep hierarchies of features while maintaining a reduced risk of overfitting.

	Our findings open several paths for further research; among these, we mention the following.
	\begin{itemize}

		\item The possibility of exploring the geometry of the random fields associated with a given architecture, for instance, characterizing the expected behavior of their critical points and the Lipschitz-Killing curvatures of their excursion sets (see~\cite{book:73784},~\cite{book:145181}). This topic is already the subject of ongoing research.

		\item The investigation of covariance kernels and the corresponding neural network regime when both width and depth are finite, but tend jointly to infinity. It is not difficult to show that the associated random fields are still isotropic (due to rotational invariance of Gaussian variables), but the behavior of their covariance kernels will be different and presumably depend on the ratio $L/n$, in analogy with what was observed for the asymptotic Gaussianity in~\cite{10.1214/23-AAP1933}.

		\item The investigation of covariance kernels and geometry of excursion sets for more general network architectures, not necessarily feed-forward or fully connected. The most promising alternative seems to be convolutional neural networks, which have very similar associated covariance kernels.

		\item The characterization of the exact distribution of the limiting spectral law associated to different choices of activation functions.

		\item The analysis of the limiting behavior when the dimension $d$ grows and when $d$ and $L$ grow jointly.

	\end{itemize}

	These open problems are currently being explored by the authors.
\newpage
\subsection*{Acknowledgements}
This work was partially supported by the MUR Excellence Department Project MatMod@TOV awarded to the Department of Mathematics, University of Rome Tor Vergata, CUP E83C18000100006. We also acknowledge financial support from the MUR 2022 PRIN project GRAFIA, project code 202284Z9E4, the INdAM group GNAMPA and the PNRR CN1 High Performance Computing, Spoke 3.

	\bibliographystyle{plain}
	\bibliography{reference}
	\newpage
	\appendix %\label{appendix}
	\section{Proofs} \label{app:proofs}

	In this appendix we collect all the proofs of the paper. In~\Cref{A1} we derive  analytic relationships between kernel functions and angular power spectra. In~\Cref{A2_new} we study the asymptotic behavior of iterated kernels and their derivatives; we also investigate the asymptotics of spectral moments for the low and high-disorder cases. In~\Cref{A3_new} we study  the intermediate (sparse) case. Finally, in~\Cref{A6}, we give the proofs of our two main theorems. In~\Cref{kernel_comp} we compute the kernel associated to Gaussian activation functions and we bound the derivative of the normalized kernel associated to hyperbolic tangent activation functions.

	\subsection{On the link between kernel derivatives and spectral moments}
	\label{A1}
	Let $\kappa:[-1,1]\to \R$  be a generic continuous covariance kernel corresponding to a unit-variance, isotropic random field on $\mathbb{S}^d$; similarly to \eqref{cov_gege}, by Schoenberg's theorem we have the identity
	\begin{equation}\label{densita1}  \kappa(t) = \sum_{\ell =0}^\infty D_{\ell; \kappa} G_{\ell,d}(t)\ , \ t\in [-1,1]
	\end{equation}
	where $(G_{\ell,d})_{\ell\in\N}$ is the sequence of normalized Gegenbauer polynomials and $(D_{\ell;\kappa})_{\ell\in \N}$  is a sequence of non-negative numbers. Since the field has unit-variance, it holds
	\[
	\kappa(1)= \sum_{\ell=0}^\infty D_{\ell;\kappa}=1 \ .
	\]

	Consistently with \eqref{associated_rv}, we define $X_\kappa$ as the random variable associated to the kernel $\kappa$,
	\begin{equation*}
		\mathbb P(X_\kappa=\ell)  = D_{\ell;\kappa} \ , \ \ell\in \N \ .
	\end{equation*}
	Schoenberg's theorem guarantees that there is a unique variable associated to a given isotropic random field with covariance kernel $\kappa$.

	\begin{proposition}\label{mat_momenti}
		If $\kappa$ is $n$-times differentiable in a neighborhood of $1$, then $X_\kappa$ has $2n$ finite moments. Furthermore, for all $s\leq  n$, these moments satisfy the identity
		\begin{equation}\label{eq:mat_mom}
			\sum_{i=1}^{2s} a_{i;s} \E[X_\kappa^i]  = \frac{(d+2s-2)!!}{(d-2)!!} \kappa^{(s)}(1)
		\end{equation}
		where $\kappa^{(s)}$ is the $s$-th derivative of the kernel $\kappa$ and
		\begin{align*} &a_{i;s+1} = a_{i-2;s} + (d-1) a_{i-1;s} - s(d+s-1) a_{i;s} \qquad & s\geq 0,\ \ & 1\leq i \leq 2(s+1)\ , \\
			&a_{i;s+1} =0 & s\geq 0, \ \   & i \leq 0 \text{ or } i> 2(s+1)
		\end{align*}
		with  the initial condition
		\begin{equation}
			\label{cond_iniziali}a_{i;1} = \begin{cases}
				d-1 & \text{ if } i = 1 \\
				1 & \text{ if } i = 2 \\
				0 & \text{ otherwise}
			\end{cases} \ .
		\end{equation}
		Moreover, $$ a_{2n-1;n} = n(d-1), \qquad
		a_{2n;n} =1 \ .$$
	\end{proposition}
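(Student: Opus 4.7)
The proof reduces to an algebraic identity at the level of polynomials in one variable, together with a smoothness-to-moments transfer argument.

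\emph{Step 1: closed form for $G_{\ell,d}^{(s)}(1)$.} I would iterate the Gegenbauer derivative formula \eqref{geg_der}. A first application gives $G_{\ell,d}' = \tfrac{\ell(\ell+d-1)}{d}\,G_{\ell-1,d+2}$; applying the formula again with the shifted parameters $(\ell-1,d+2)$ yields $G_{\ell,d}'' = \tfrac{\ell(\ell-1)(\ell+d-1)(\ell+d)}{d(d+2)}\,G_{\ell-2,d+4}$; iterating $s$ times and using $G_{\ell-s,d+2s}(1)=1$, I obtain
\[
G_{\ell,d}^{(s)}(1) \;=\; \frac{(d-2)!!}{(d+2s-2)!!}\,\prod_{j=0}^{s-1}(\ell-j)(\ell+d-1+j).
\]
In particular these quantities are non-negative, and vanish exactly when $\ell<s$.

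\emph{Step 2: term-by-term differentiation of the Schoenberg expansion.} Next I would differentiate \eqref{densita1} $s$ times and evaluate at $t=1$. Because both $D_{\ell;\kappa}\ge 0$ and $G_{\ell,d}^{(s)}(1)\ge 0$, the interchange of derivative and sum (and the finiteness of the limit) can be justified by a monotone-convergence argument applied to finite-difference approximations of $\kappa^{(s)}(1)$. This yields
\[
\frac{(d+2s-2)!!}{(d-2)!!}\,\kappa^{(s)}(1) \;=\; \sum_{\ell=0}^\infty D_{\ell;\kappa}\prod_{j=0}^{s-1}(\ell-j)(\ell+d-1+j) \;=\; \E\bigl[P_s(X_\kappa)\bigr],
\]
where I set $P_s(x):=\prod_{j=0}^{s-1}(x-j)(x+d-1+j)$. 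Since $P_n$ is a polynomial of degree $2n$ with positive leading coefficient, the finiteness of the above sum for $s=n$ also gives $\E[X_\kappa^{2n}]<\infty$, which is the moment-existence statement of the proposition.

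\emph{Step 3: the recursion for $a_{i;s}$.} I would now identify $a_{i;s}$ with the coefficient of $x^i$ in $P_s(x)$. The announced recursion and initial conditions then follow from the trivial factorization
\[
P_{s+1}(x) \;=\; P_s(x)\bigl(x^2+(d-1)x-s(d+s-1)\bigr):
\]
matching powers of $x$ gives exactly $a_{i;s+1}=a_{i-2;s}+(d-1)\,a_{i-1;s}-s(d+s-1)\,a_{i;s}$, and $P_1(x)=x(x+d-1)=x^2+(d-1)x$ reproduces \eqref{cond_iniziali}. Finally, since $P_n$ is monic of degree $2n$ we have $a_{2n;n}=1$, while $a_{2n-1;n}$ equals the sum of the constant terms of the $2n$ linear factors of $P_n$, namely $\sum_{j=0}^{n-1}\bigl((-j)+(d-1+j)\bigr)=n(d-1)$.

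\emph{Main obstacle.} The one genuinely delicate point is Step~2: passing from $n$-fold differentiability of $\kappa$ at $1$ to the legitimate term-by-term differentiation of the Gegenbauer expansion, and hence to the finiteness of the $2n$-th moment of $X_\kappa$. The argument hinges on the non-negativity of all the summands involved (both $D_{\ell;\kappa}$ and $G_{\ell,d}^{(s)}(1)$) together with monotone convergence applied to suitable finite-difference quotients; everything else in the proof is bookkeeping in polynomial algebra.
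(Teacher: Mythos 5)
Your proposal follows essentially the same route as the paper's proof: iterate the Gegenbauer derivative formula \eqref{geg_der} to express $\kappa^{(s)}(1)$ as $\frac{(d-2)!!}{(d+2s-2)!!}\,\E[p_s(X_\kappa)]$ with $p_s(x)=\prod_{j=0}^{s-1}(x-j)(x+d+j-1)$, deduce moment finiteness from the positivity of the summands, and read off the recursion for $a_{i;s}$ from the factorization $p_{s+1}(x)=(x-s)(x+d+s-1)p_s(x)$. If anything, your Step~2 is more careful than the paper (which simply asserts the interchange of sum and derivative), and your explicit derivations of $a_{2n;n}=1$ and $a_{2n-1;n}=n(d-1)$ from the monic factorization are correct.
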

	\begin{proof}
		Using the derivation formula for Gegenbauer polynomials~\eqref{geg_der}, and since the $s$-th derivative of $\kappa$ in a neighborhood of $1$ exists, we can differentiate equation~\eqref{densita1}  $s$ times and interchange the series with the derivative. Hence we obtain
		$$ \kappa^{(s)}(1) = \sum_{\ell=0}^\infty  D_{\ell;\kappa} \tonde{\prod_{j=0}^{s-1} \frac{ (\ell-j)(\ell  + d+j -1)}{d+2j}}\ .$$
		Notice that the first $s-1$ summands are equal to $0$. Denoting by $p_s$ the polynomial
		\begin{equation}
			\label{ps_def}p_s(x)= \prod_{j=0}^{s-1}  (x -j) (x + d + j-1)
		\end{equation}
		we have
		$$ \frac{ (d+2s -2)!!}{(d-2)!!}\kappa^{(s)}(1) = \sum_{\ell=0}^\infty D_{\ell;\kappa} p_s(\ell) = \E\quadre{p_s(X_\kappa)}\ .$$
		It follows that $X_\kappa$ has $2s$ finite moments, as claimed.

		Let us now proceed to establish~\cref{eq:mat_mom}. Since $\deg p_s  =2s$ and $p_s(0)=0$,  there exist $(a_{i;s})_{i=0,\dotsc, 2s}$  such that $a_{0;s} =0$ and
		$$ p_s(x) = \sum_{i=0}^{2s} a_{i;s} x^i \ .$$
		From equation~\eqref{ps_def} it follows that $p_1(x) = x^2 + (d-1)x$ and hence~\eqref{cond_iniziali} is valid. Our idea is to proceed by induction on all other values of $s$. More precisely, from~\eqref{ps_def} and some simple algebraic manipulations we obtain
		$$ p_{s+1}(x) = (x-s)(x+d+s-1) p_s(x)$$
		and in particular
		\begin{align*}
			p_{s+1}(x) =
			&
			%(x-s)(x+d+s-1) \sum_{i=1}^{2s} a_i^s x^i= \tonde{x^2 + (d-1) x - s(d+s-1)} \sum_{i=1}^{2s} a_i^s x^i  = \\
			\sum_{i=1}^{2s} a_{i;s} x^{i+2} + (d-1)\sum_{i=1}^{2s} a_{i;s} x^{i+1} - s(d+s-1)  \sum_{i=1}^{2s} a_{i;s} x^i \\
			%			=&\sum_{i=3}^{2(s+1)} a_{i-2}^s x^i + (d-1) \sum_{i=2}^{2s + 1} a_{i-1}^s x^i - s(d+s-1) \sum_{i=1}^{2s} a_i^s x^i\\
			%			=& \sum_{i=1}^{2(s+1)} \tonde{ a_{i-2}^s + (d-1)a_{i-1}^s - s(d+s-1)a_i^s} x^i - (a_{-1}^s x + a_{0}^s x^2) \\
			%   &\phantom{=} -(d-1)(a_0^s x + a_{2s+1}^s x^{2s+2})
			% + s(d+s-1) (a_{2s+1}^s x^{2s+1} + a_{2s+2}^s x^{2s+2})\\
			= & \sum_{i=1}^{2(s+1)} \tonde{ a_{i-2;s} + (d-1)a_{i-1;s} - s(d+s-1)a_{i;s}} x^i  = \sum_{i=1}^{2(s+1)} a_{i;s+1} x^i
		\end{align*}
		where to obtain closed-form expressions we have set $a_{i;s}=0$ if $i< 0 $ or $i> 2s$.
		The proof is therefore completed.
	\end{proof}

	\subsection{\texorpdfstring{Moments in low and high-disorder regime ($\kappa'(1)\neq 1$)}{k'(1) not 1}}\label{A2_new}

	Our purpose in this subsection is to derive some analytic expression for the asymptotic behavior of the derivative of iterated covariance kernels  (when the derivative of the covariance at the origin is not $1$).  Combining these results with \Cref{mat_momenti} we will obtain limits of the moments of the spectral law when the depth diverges.
	For the computation of higher-order derivatives of the covariance function, we will exploit the well-known Fa\`a di Bruno formula: if $f$ and $g$ are sufficiently regular functions, then
	$$\frac{\mathrm d^n }{ \mathrm d x^n}  f(g(x)) = \sum_{s=1}^n f^{(s)} (g(x)) B_{n,s}(g'(x), g''(x),\dotsc, g^{(n-s+1)}(x))$$
	where $f^{(i)}$  denotes the  $i$-th derivative of $f$ and $B_{n,s}$ are the incomplete exponential Bell polynomials, given by
	\begin{equation}
		\label{new_Bell_pol}
		B_{n,s}(x_1,\dotsc, x_{n-s+1}) =  n! \sum_{ j \in Q_{n,s}}  \prod_{i=1}^{n-s+1} \frac{x_i^{j_i}}{(i!)^{j_i} j_i!}
	\end{equation}
	where
	$$Q_{n,s} = \left\{ j=(j_1,\dotsc, j_{n-s+1}) \in \N^{n-s+1}\:\left |\;  \sum_{i=1}^{n-s+1} j_i = s , \quad \sum_{i=1}^{n-s+1} i j_i = n  \right\} \right. \ .
	$$
	By the previous definition, it is easy to show that
	\begin{equation}\label{Bell_remark}
		\begin{aligned}B_{n,n}(x_1)= x_1^n\ , \\
			B_{n,1}(x_1,\dotsc, x_n) = x_n \ .
		\end{aligned}
	\end{equation}

	Let $K$ be a continuous angular covariance kernel in dimension $d$.
	We assume that $K$ is an angular kernel, i.e. $K(x,y) = \kappa( \langle x , y \rangle )$.
	We denote by $\kappa_L$ the $L-1$ times composition of $\kappa$, with $\kappa_1=\kappa$.
	In \Cref{lem::kernel} we show that a random neural network with $L$ layers has (angular) kernel of this form.
	We also denote $X_L = X_{\kappa_L}$.
	\begin{proposition}\label{new_ex_kn}

		Let $\kappa$ be  $n$ times differentiable in a neighborhood of $1$. If $\kappa' (1)\neq 1 $, then
		$$ \kappa_L^{(n)}(1) = \sum_{i=1}^n A_{i;n} \kappa' (1)^{Li}$$
		where $A_{1;1} =1$  and $A_{i;n}$ is a constant depending only on the first $n$ derivatives of $\kappa$ at $1$.	Moreover, as $L \to \infty$,
		$$ \kappa_L^{(n)}(1) = \begin{cases} \kappa'(1)^{Ln}\tonde{A_{n;n} + o(1)} & \text { if } \kappa'(1)>1\\
			\kappa'(1)^L \tonde{A_{1;n} + o(1)} & \text{ if } \kappa'(1)<1
		\end{cases} \ . $$
	\end{proposition}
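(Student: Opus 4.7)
The plan is strong induction on $n$, combined with solving a first-order linear recurrence in $L$ for each fixed $n$. For the base case $n = 1$, the chain rule together with $\kappa_{L-1}(1) = 1$ gives $\kappa_L'(1) = \kappa'(\kappa_{L-1}(1))\,\kappa_{L-1}'(1) = \kappa'(1)\,\kappa_{L-1}'(1)$, which iterates to $\kappa_L'(1) = \kappa'(1)^L$, matching $A_{1;1} = 1$ and both asymptotic regimes.

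For the inductive step, I would fix $n \geq 2$, set $\alpha := \kappa'(1)$ and $f_L := \kappa_L^{(n)}(1)$, and apply Fa\`a di Bruno to $\kappa_L = \kappa \circ \kappa_{L-1}$ at $u = 1$, isolating the $s = 1$ contribution:
\begin{equation*}
f_L \;=\; \alpha\, f_{L-1} \;+\; \sum_{s=2}^{n} \kappa^{(s)}(1)\, B_{n,s}\bigl(\kappa_{L-1}'(1),\ldots,\kappa_{L-1}^{(n-s+1)}(1)\bigr).
\end{equation*}
For each $s \geq 2$, every argument of $B_{n,s}$ is a derivative of $\kappa_{L-1}$ of order at most $n - s + 1 \leq n - 1$, so by the inductive hypothesis it is a linear combination of $\alpha^{(L-1)a}$ with $1 \leq a \leq n-s+1$. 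Expanding the monomials $\prod_i (\kappa_{L-1}^{(i)}(1))^{j_i}$ from \eqref{new_Bell_pol} and using the partition constraints $\sum_i j_i = s$ and $\sum_i i j_i = n$ defining $Q_{n,s}$, each such monomial is itself a linear combination of $\alpha^{(L-1)m}$ with $m$ ranging over integers between $s \geq 2$ and $n$. Hence
\begin{equation*}
f_L \;=\; \alpha\, f_{L-1} \;+\; \sum_{m=2}^{n} B_m\, \alpha^{(L-1)m},
\end{equation*}
where each $B_m$ depends only on $\kappa^{(1)}(1), \ldots, \kappa^{(n)}(1)$.

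Next I would solve this first-order linear recurrence explicitly. The homogeneous part contributes $C \alpha^L$, and for each $m \in \{2,\ldots,n\}$ a particular solution of the shape $D_m \alpha^{Lm}$ satisfies $D_m(\alpha^m - \alpha) = B_m$, giving $D_m = B_m/(\alpha^m - \alpha)$. This is well-defined because $\alpha > 0$ (since $\kappa'(1) = \sum_\ell D_{\ell;\kappa}\,\ell(\ell+d-1)/d \geq 0$, and $\alpha = 0$ would force $\kappa$ constant) and $\alpha \neq 1$ together imply $\alpha^{m-1} \neq 1$ for all $m \geq 2$. Fixing $C$ by the initial condition $f_1 = \kappa^{(n)}(1)$ yields $f_L = \sum_{i=1}^n A_{i;n}\, \alpha^{Li}$ with the $A_{i;n}$ constructed from $B_2,\ldots,B_n$ and the value $\kappa^{(n)}(1)$, hence depending only on the first $n$ derivatives of $\kappa$ at $1$. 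The asymptotic statement then follows immediately by keeping the dominant term: when $\alpha > 1$ the largest exponent $i = n$ dominates, producing $\alpha^{nL}(A_{n;n} + o(1))$; when $\alpha < 1$ the smallest exponent $i = 1$ dominates, producing $\alpha^L(A_{1;n} + o(1))$.

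The main obstacle I expect is the exponent bookkeeping inside the Fa\`a di Bruno expansion: one has to argue carefully from the partition identities defining $Q_{n,s}$ that no monomial in $B_{n,s}$ (for $s \geq 2$) can produce a power of $\alpha^{L-1}$ outside $\{2, \ldots, n\}$, and that the coefficients $A_{i;n}$ depend only on $\kappa^{(1)}(1),\ldots,\kappa^{(n)}(1)$ rather than on higher-order derivatives (which cannot appear, since the outer factor in Fa\`a di Bruno is $\kappa^{(s)}(1)$ with $s \leq n$). Once this combinatorial bookkeeping is in place, the rest reduces to a routine linear-recurrence computation.
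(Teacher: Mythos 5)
Your proposal is correct and follows essentially the same route as the paper: induction on $n$, Fa\`a di Bruno with the $s=1$ term isolated to get a first-order linear recurrence in $L$, and the same partition-constraint bookkeeping showing the inhomogeneous terms contribute only powers $\kappa'(1)^{Lm}$ with $2\le m\le n$. The only (cosmetic) difference is that you solve the recurrence by homogeneous-plus-particular ansatz, whereas the paper unrolls it into an explicit sum over layers and evaluates the resulting geometric series; both hinge on the same fact that $\kappa'(1)^{m-1}\neq 1$ for $m\ge 2$.
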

	\begin{proof}
		We prove the claim by induction on $n$. For $n=1$ we have
		$$ \kappa_{L+1}'(x)  = \frac{\di }{\di x} \kappa(\kappa_L(x))  = \kappa'(\kappa_L(x)) \kappa'_L(x)$$
		and so, since $\kappa_L(1)=1$, we have $  \kappa_{L+1}'(1)   = \kappa'(1) \kappa'_L(x) = \kappa'(1)^{L+1} $.
		Let now $n>1$. From Fa\`a di Bruno's formula we obtain
		$$ \kappa_{L+1}^{(n)}(x) = \frac{ \mathrm d^n }{ \mathrm d x^n} \kappa\big(\kappa_L(x)\big) = \sum_{s=1}^n \kappa^{(s)} \big(\kappa_L(x)\big) B_{n,s} \tonde{\kappa_L'(x),\dotsc, \kappa_L^{(n-s+1)}(x)} \ .
		$$
		Hence, denoting by $c_i$ the $i$-th derivative of $\kappa$ at $1$, using $\kappa_L(1)=1$ and $B_{n,1}(x_1,\dotsc, x_n) = x_n$ (cfr.~\eqref{Bell_remark}) we have
		$$ \kappa_{L+1}^{(n)}(1) =  c_1 \kappa_{L}^{(n)}(1) +  \sum_{ s=2}^n c_s B_{n,s} \tonde{\kappa_L'(1),\dotsc, \kappa_L^{(n-s+1)} (1)}$$
		and so we have the following closed formula:
		\begin{equation}\label{closed} \kappa_L^{(n)}(1) = c_1^{L-1} c_n + \sum_{q=1}^{L-1} \sum_{s=2}^n c_s B_{n,s}\tonde{\kappa_q'(1),\dotsc, \kappa_q^{(n-s+1)}(1)} c_1^{L-q-1} \ .
		\end{equation}
		Now consider $s > 1$, so that $n - s + 1 < n$. Using the definition of Bell polynomials (cfr.~\eqref{new_Bell_pol}) and the inductive hypothesis we have
		\begin{align*}
			B_{n,s} \tonde{\kappa_q'(1), \cdots, \kappa_q^{(n-s+1)}(1)} &= n!\sum_{j\in Q_{n,s}}  \prod_{i=1}^{n-s+1} \frac{\tonde{\kappa_q^{(i)}(1)}^{j_i}}{(i!)^{j_i} j_i!	} = n! \sum_{j\in Q_{n,s}}\prod_{i=1}^{n-s+1} \frac{\tonde{ \sum_{ p= 1}^i A_{p;i} c_1^{pq}}^{j_i}}{(i!)^{j_i} j_i!} \\
			& = \sum_{j\in Q_{n,s}} \!\! n! \!\! \prod_{i=1}^{n-s+1} \!\! \frac{1}{(i!)^{j_i}} \!\!  \sum_{k_{1;i}+\dotsb+ k_{i;i}  = j_i} \,
			\prod_{p=1}^i \frac{A_{p;i}^{k_{p;i} }}{k_{p;i}! } c_1^{pqk_{p;i}} \\
			%&= \sum_{j\in Q_{n,s}}  \prod_{i=1}^{n-s+1} \frac{n!}{(i!)^{j_i}} \sum_{k_{1;i}+\dotsb+ k_{i;i}  = j_i}  c_1^{q\sum_{p=1}^{i} pk_{p;i}}\prod_{p=1}^i \frac{A_{p;i}^{k_{p;i} }}{k_{p;i}! }   \\
			&=
			\sum_{j\in Q_{n,s}} \!\! n! \!\! \prod_{i=1}^{n-s+1} \!\! \frac{1}{(i!)^{j_i}} \!\! \sum_ {s_i= j_i}^{ij_i} c_1^{qs_i} \hspace{-15pt} \sum_{(k_{h;i})_{h=1, \cdots,  i}\atop_{\atop{k_{1;i} +\dotsb + k_{i;i} = j_i\atop{k_{1;i}  +\dotsb + i k_{i;i} =s_i}}}} \hspace{-5pt} \prod_{p=1}^i \frac{A_{p;i}^{k_{p;i} }}{k_{p;i}! }
			=  \sum_{a=s}^n c_1^{qa}  \gimel_{a;n,s}
		\end{align*}
		where in  last equality we have used that for  $j\in Q_{n,s}$ we have $ \sum_{i=1}^{n-s+1} j_i = s$ and $\sum_{i=1}^{n-s+1} i j_i =n$,
		and we have set
		$$ \gimel_{a;n,s} =\sum_{j\in Q_{n,s}}  \sum_{(s_i)_{i=1, \dots, n-s+1}\atop_{s_1 +\dotsb + s_{n-s+1} = a\atop{ j_i \leq s_i \leq ij_i}}} n!  \prod_{i=1}^{n-s+1} \frac{1}{(i!)^{j_i}} \sum_{(k_{h;i})_{h=1, \dots,  i}\atop_{\atop{k_{1;i} +\dotsb + k_{i;i} = j_i\atop{k_{1;i}  +\dotsb + i k_{i;i} =s_i}}}} \prod_{p=1}^i \frac{A_{p;i}^{k_{p;i} }}{k_{p;i}!}\ .$$
		Using the previous equality, one can rewrite~\eqref{closed} as
		\begin{align*} \kappa_L^{(n)}(1) &= c_1^{L-1} c_n + \sum_{q=1}^{L-1}\sum_{s=2}^n c_s \sum_{a=s}^n \gimel_{a;n,s} c_1^{qa + L-q-1}  =
			c_1^{L-1} \tonde{ c_n + \sum_{s=2}^n c_s  \sum_{a=s}^n \gimel_{a;n,s} \sum_{q=1}^{L-1}c_1^{q(a -1)}}\\
			&=
			c_1^{L-1} \tonde{ c_n + \sum_{s=2}^n c_s \sum_{a=s}^n \gimel_{a;n,s} \frac{c_1^{L(a-1)} -c_1^{a-1}}{c_1^{a-1}-1}} \\
			&=c_1^L \tonde{ \frac{c_n}{c_1}  - \sum_{s=2}^n\sum_{a=s}^n \frac{ c_s\gimel_{a;n,s} c_1^a}{c_1^{a-1} -1}} + \sum_{a=2}^n c_1^{La} \tonde{\sum_{s=2}^a \frac{c_s\gimel_{a;n,s}}{c_1^a -c_1}}
		\end{align*}
		whence the claim.
	\end{proof}

	\begin{proposition}\label{new_kno1}
		Let $\kappa$ be infinitely many times differentiable in a neighborhood of $1$.
		If $\kappa'(1)> 1$, then for all $n\in \N\setminus\{ 0\}$ we  have, as $L \to \infty$,
		\begin{align}
			&\E[X_L^{2n}]  =  \kappa'(1)^{nL} \tonde{ C_n+ o(1) },\label{con1}\\
			& \E[X_L^{2n-1}] =
			%\frac{ C_n \kappa'(1)^{Ln} - \E[X_L^{2n}]}{n(d-1)}
			o(\kappa'(1)^{nL})\label{con2}
		\end{align}
		where
		$$C_n = A_{n;n}\frac{(d+2n -2)!!}{(d-2)!!}  $$
		with $A_{n;n}$  as in~\Cref{new_ex_kn}. Otherwise, if $
		\kappa'(1)<1$, then  for all $n\in \N\setminus\{ 0\}$  we have, as $L\to + \infty,$
		\begin{equation}
			\label{con3}\E[X_L^{n}] = O(\kappa'(1)^L) \ .
		\end{equation}
	\end{proposition}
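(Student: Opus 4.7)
The plan is to prove all three asymptotics simultaneously by a single strong induction on $n$, combining the linear moment recursion of \Cref{mat_momenti} applied to $\kappa_L$ with the derivative asymptotics of \Cref{new_ex_kn}. Since $a_{2n;n}=1$, \Cref{mat_momenti} isolates the top moment as
\begin{equation*}
	\E[X_L^{2n}] = \frac{(d+2n-2)!!}{(d-2)!!}\,\kappa_L^{(n)}(1) - \sum_{i=1}^{2n-1} a_{i;n}\,\E[X_L^i].
\end{equation*}
By \Cref{new_ex_kn}, in the high-disorder regime the first term behaves like $\kappa'(1)^{nL}(A_{n;n}+o(1))$ (times the explicit factorial constant), whereas in the low-disorder regime it is $O(\kappa'(1)^L)$. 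The game is then to show that the remainder sum has strictly smaller order, which closes the induction.

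For the base case $n=1$, \Cref{mat_momenti} at $s=1$ reads $(d-1)\E[X_L] + \E[X_L^2] = d\,\kappa'(1)^L$. Non-negativity of $X_L$ yields $\E[X_L^2] \leq d\,\kappa'(1)^L$, while Jensen gives $\E[X_L] \leq \sqrt{\E[X_L^2]} = O(\kappa'(1)^{L/2})$. In high disorder, $\kappa'(1)^{L/2} = o(\kappa'(1)^L)$, so substituting back produces $\E[X_L^2] = \kappa'(1)^L(d+o(1))$, matching $C_1 = A_{1;1}\cdot d!!/(d-2)!! = d$; in low disorder both bounds are immediately $O(\kappa'(1)^L)$.

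For the inductive step, all $\E[X_L^i]$ with $i \leq 2(n-1)$ are controlled by the inductive hypothesis, so the only delicate term in the sum is $\E[X_L^{2n-1}]$, which sits just above the induction range. The key observation is Cauchy--Schwarz,
\begin{equation*}
	\E[X_L^{2n-1}] \leq \sqrt{\E[X_L^{2n-2}]\,\E[X_L^{2n}]}.
\end{equation*}
Writing $M := \E[X_L^{2n}]$ and feeding this bound into the recursion gives an inequality of the form $M \leq C\,\kappa'(1)^{nL} + C'\sqrt{\kappa'(1)^{(n-1)L}\,M}$, which, solved as a quadratic in $\sqrt{M}$, delivers the \emph{a priori} bound $M = O(\kappa'(1)^{nL})$. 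Reapplying Cauchy--Schwarz then upgrades to $\E[X_L^{2n-1}] = O(\kappa'(1)^{(n-1/2)L}) = o(\kappa'(1)^{nL})$, which is \eqref{con2}. Plugging this refined estimate back into the recursion makes every remainder term $o(\kappa'(1)^{nL})$, whence \eqref{con1} with $C_n = A_{n;n}(d+2n-2)!!/(d-2)!!$. The low-disorder estimate \eqref{con3} follows by the same scheme, with all occurrences of $\kappa'(1)^{kL}$ replaced by $\kappa'(1)^L$, and odd moments controlled by Cauchy--Schwarz against the two adjacent even moments.

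The main obstacle is precisely the appearance of $\E[X_L^{2n-1}]$ in the remainder sum: it lies one power above the inductive hypothesis, and a direct H\"older bound against the constant gives only the circular estimate $\E[X_L^{2n}]^{(2n-1)/(2n)}$. The quadratic-inequality trick above works because pairing $\E[X_L^{2n-1}]$ with the lower even moment $\E[X_L^{2n-2}]$ via Cauchy--Schwarz converts the inequality for $M$ into one strictly sub-linear in $\sqrt{M}$, allowing the induction to close with the correct sharp constant coming from \Cref{new_ex_kn}.
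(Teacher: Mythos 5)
Your proposal is correct and follows essentially the same route as the paper: induction on $n$, combining the moment identity of \Cref{mat_momenti} with the derivative asymptotics of \Cref{new_ex_kn}, and isolating the top even moment via $a_{2n;n}=1$. The only (minor) difference is in how you control the odd moment just below the top: you use Cauchy--Schwarz against the two adjacent even moments plus a quadratic inequality in $\sqrt{M}$, whereas the paper first upper-bounds $\E[X_L^{2n+2}]$ by dropping the non-negative term $(n+1)(d-1)\E[X_L^{2n+1}]$ and then applies H\"older, $\E[X_L^{2n+1}]\leq \E[X_L^{2n+2}]^{(2n+1)/(2n+2)}$ --- so the ``circularity'' you worried about does not actually arise, but your workaround is equally valid.
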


	\begin{proof} First, we prove the claim for $\kappa'(1)>1$ by induction on $n\geq 1$. To simplify the notation, we put $c_1 = \kappa'(1)$.

		Base case. Let $n=1$,  from \Cref{mat_momenti} and \Cref{new_ex_kn} we have
		\begin{equation}\label{car1}(d-1) \E[X_L] + \E\quadre{X_L^2} = d c_1^L \ .
		\end{equation}
		Since $X_L$ takes only non-negative values we have immediately
		$\E\big[X_L^2\big] \leq d c_1^L\ .$
		On the other hand, by H\"older inequality we obtain
		$$ \E[X_L] \leq \E\big[X_L^2\big]^{ 1/2} \leq \sqrt d c_1^{L/2} \ ,
		$$
		so~\eqref{con2} holds for $n=1$. Moreover,
		$$ \E\big[X_L^2\big] \geq  dc_1^L  - \sqrt d (d-1) c_1^{L/2} = c_1^L ( d+ o(1)), $$
		so~\eqref{con1} holds for $n=1$ (recall that $A_{1,1}=1$).

		Inductive step. We assume that~\eqref{con1} and~\eqref{con2} hold for all $i \leq n$ and we prove that it holds for $n+1$. Let $(a_{i;n+1})_{i=1, \cdots, 2n+2}$ be as in \Cref{mat_momenti}. A trivial computation shows that
		%		Using \Cref{rem::mat_momenti} ans since $Q(i) $ holds for $i=1,\dotsc, s $ we have
		\begin{align*}
			S &=
			\sum_{i=1}^{2(n+1)}a_{i;n+1} \E\quadre{X_L^i} \\
			%\\			=  \sum_{i=1}^{2n}a_i^{n+1} \E[X_L^i] + a_{2n+1}^{n+1} \E\big[X_L^{2n+1}\big] + a_{2n+2}^{n+1}\E\big[X_L^{2n+2}\big]\\
			%			& =			\sum_{i=1}^{2n}a_i^{n+1} \E[X_L^i] + (n+1)(d-1) \E\big[X_L^{2n+1}] + \E\big[X_L^{2n+2}] \\
			& =    \sum_{i=1}^n \Big( a_{2i-1;n+1} \E\big[X_L^{2i-1}
			\big] + a_{2i;n+1}\E\big[X_L^{2i}\big]\Big)
			\!+\! (n+1)(d-1) \E\big[X_L^{2n+1}\big] \!+\! \E\big[X_L^{2n+2}\big] .
			%\\			& =   \sum_{i=1}^n \left( \frac{C_i a_{2i-1}^{n+1}}{i(d-1)} c_1^{iL} + \tonde{ a_{2i}^{n+1} - \frac{a_{2i-1}^{n+1}}{i(d-1)}}\E[X_L^{2i}] + o(c_1^{iL}) 			\right)\\
			%			& \phantom{=} + (n+1)(d-1) \E[X_L^{2n+1}] + \E[X_L^{2n+2}]
		\end{align*}
		Since~\eqref{con1} and~\eqref{con2} hold for all $i\leq n$,  each term in the summation is $O\big(c_1^{iL}\big)$, and thus
		$$ S= (n+1)(d-1)\E\big[X_L^{2n+1} \big]+ \E\big[X_L^{2n+2}\big] +O(c_1^{nL}) \ . $$
		Using \Cref{mat_momenti} and  \Cref{new_ex_kn} we have
		\begin{equation}\label{S_def}
			\begin{aligned}
				S = c_1^{(n+1)L} \tonde{ \frac{ (d+2(n+1)-2)!!}{(d-2)!!} A_{n+1,n+1} +o(1)} \ .
				%	&= c_1^{(n+1)L}\tonde{ C_{n+1} +o(1)}\quad \text{ as } L \to + \infty \ .
			\end{aligned}
		\end{equation}
		Combining the last two statements and using the definition of $C_{n+1}$, we obtain
		\begin{equation}\label{ss1}
			(n+1)(d-1) \E\big[X_L^{2n+1}\big]   +  \E\big[X_L^{2n+2}\big]= c_1^{(n+1)L}(C_{n+1} + o(1) ) \ .
		\end{equation}
		Since $X_L\geq 0$ we have
		$$ \E\big[X_L^{2n+2}\big] \leq c_1^{(n+1)L}(C_n +o(1)) \ .$$
		Using the H\"older inequality we get
		%$$ \E[X_L^{2n+2}] \leq c_1^{(n+1)L}(C_{n+1} +o(1)). $$
		%Moreover,
		$$
		\E\big[X_L^{2n+1}\big] \leq \E\big[X_L^{2n+2}\big]^{(2n+1)/(2n+2)} \leq c_1^{(n+1/2)L}\tonde{C_{n+1}^{(2n+1)/(2n+2)} +o(1)} = o\Big(c_1^{L(n+1)}\Big)$$
		and thus~\eqref{con2} holds for $n+1$. Moreover, from~\eqref{ss1} we have
		\begin{align*}  \E\big[X_L^{2n+2}\big] =   c_1^{(n+1)L}(C_{n+1} +o(1)) - (n+1)(d-1) \E\big[X_L^{2n+1}\big]\geq c_1^{(n+1)L}\big(C_{n+1}+o(1)\big)
		\end{align*}
		so that~\eqref{con1} holds for $n+1$.

		We move to the case  $\kappa'(1)<1$. We proceed by induction on $n$. From~\eqref{car1}, since $X_L$ takes only non-negative values, we have
		$$ \E[X_L ] \leq \frac{d}{d-1} \kappa'(1)^L $$
		$$  \E[X_L^2 ] \leq d \kappa'(1)^L $$
		hence~\eqref{con3} holds for $n=1$. Let $n\geq 1 $.  Using the same computation as in the proof of  \Cref{new_kno1} we obtain
		$$
		(n+1) (d-1) \E[X_L^{2n+1}] + \E[X_L^{2n+2}] = O(c_1^L) \ .$$
		Since $X_L\geq 0$,~\eqref{con3} holds for $n+1$.
	\end{proof}

	\subsection{\texorpdfstring{Moments in sparse regime ($\kappa'(1)= 1 $)}{k'(1) equal to  1}}\label{A3_new}

	In this section, we will extend the results on the derivatives of iterated covariance  kernels and the asymptotics of moments to the case of covariance with a derivative at the origin equal to 1. To obtain these results, we will use a classic result known as Faulhaber's formula. For any $p$ and $n$  integers we have
	$$ \sum_{k=1}^n k^ p=  \sum_{m=0}^p H_{m,p} n^{p-m+1}$$
	where
	\begin{equation}\label{const_Fau}
		H_{m,p} = \sum_{k=0}^m \sum_{j=0}^k \binom{k}{j} \binom{p+1}{k}\frac{ (-1)^j (j+m)^m}{(k+1)(p+1)} \ .
	\end{equation}

	\begin{proposition}\label{new_k1}
		Let $\kappa$ be  $n$ times differentiable in a neighborhood of $1$. If $\kappa' (1)= 1 $ then
		$$ \kappa_L^{(n)}(1) = \sum_{i=0}^{n-1} F_{i;n} L^{i} $$
		where $F_{1;1} =1$  and $F_{i,n}$ is a constant depending only on the first $n$ derivatve of $\kappa$ at $1$.	Moreover,  as $L \to \infty$,
		$$ \kappa_L^{(n)}(1) =L^{n-1} (F_{n-1;n} + o(1)) \ . $$
	\end{proposition}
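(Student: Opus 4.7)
The plan is to mirror the proof of \Cref{new_ex_kn}, replacing the geometric sums $\sum_q \kappa'(1)^q$ (which carried the analysis when $\kappa'(1)\neq 1$) by polynomial sums $\sum_q q^p$ handled via Faulhaber's formula. We proceed by induction on $n$. For the base case $n=1$, differentiating $\kappa_{L+1}(x)=\kappa(\kappa_L(x))$ at $1$ and using $\kappa_L(1)=1$ together with $\kappa'(1)=1$ gives $\kappa_L'(1)\equiv 1$, a constant polynomial in $L$ with leading coefficient $1$ (for $n=1$ the sum $\sum_{i=0}^{n-1}$ has only the index $i=0$).

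For the inductive step, assume the claim for all orders $m<n$. Faà di Bruno's formula yields
\[
\kappa_{L+1}^{(n)}(1)=\sum_{s=1}^n \kappa^{(s)}(1)\, B_{n,s}\bigl(\kappa_L'(1),\ldots,\kappa_L^{(n-s+1)}(1)\bigr),
\]
and the $s=1$ term equals $\kappa_L^{(n)}(1)$ by \eqref{Bell_remark} and $\kappa'(1)=1$. Thus one obtains the finite-difference relation
\[
\kappa_{L+1}^{(n)}(1)-\kappa_L^{(n)}(1)=\sum_{s=2}^n \kappa^{(s)}(1)\, B_{n,s}\bigl(\kappa_L'(1),\ldots,\kappa_L^{(n-s+1)}(1)\bigr).
\]
For each $s\geq 2$, the entries $\kappa_L^{(i)}(1)$ involve orders $i\leq n-s+1\leq n-1$, hence by induction each is a polynomial in $L$ of degree exactly $i-1$. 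Using the explicit expression \eqref{new_Bell_pol}, every monomial $\prod_i(\kappa_L^{(i)}(1))^{j_i}$ inside $B_{n,s}$ satisfies $\sum_i j_i=s$ and $\sum_i i\,j_i=n$, so its degree in $L$ equals $\sum_i(i-1)j_i=n-s$. Summing the telescoping relation from $1$ to $L-1$ and applying Faulhaber's formula (which turns $\sum_{q=1}^{L-1}q^p$ into a polynomial of degree $p+1$) then produces the claimed representation $\kappa_L^{(n)}(1)=\sum_{i=0}^{n-1}F_{i;n}L^i$ with constants $F_{i;n}$ expressible in terms of $\{F_{\cdot;m}\}_{m<n}$ and $\{\kappa^{(s)}(1)\}_{s\le n}$.

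It remains to identify the leading coefficient $F_{n-1;n}$. The degree count above shows that only the block $s=2$ reaches degree $n-2$ before summation, which becomes $n-1$ after Faulhaber; all blocks with $s\geq 3$ contribute degree at most $n-2$ in $L$ and therefore cannot enter the top-order term. Within $Q_{n,2}$ every admissible partition produces a monomial of degree $n-2$; substituting the inductive leading asymptotics $\kappa_L^{(i)}(1)\sim F_{i-1;i}L^{i-1}$ into each such product and extracting the $L^{n-1}$ coefficient after Faulhaber summation expresses $F_{n-1;n}$ as a closed combinatorial sum in $\kappa''(1)$ and $\{F_{i-1;i}\}_{i<n}$, closing the induction and confirming the asymptotic $\kappa_L^{(n)}(1)=L^{n-1}(F_{n-1;n}+o(1))$. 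The main obstacle is the combinatorial bookkeeping: one must carefully track how the degrees of all Bell-polynomial contributions accumulate under Faulhaber summation, and verify that only the $s=2$ block feeds the top-order coefficient, so that $F_{n-1;n}$ depends only on the first $n$ derivatives of $\kappa$ at $1$, as claimed.
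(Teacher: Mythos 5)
Your proposal is correct and follows essentially the same route as the paper: both telescope the Faà di Bruno recursion (the paper's closed formula \eqref{closed} with $c_1=1$), use the inductive hypothesis to see that each Bell-polynomial block $B_{n,s}$ is a polynomial in the layer index of degree $n-s$, and then apply Faulhaber's formula to convert the sum over layers into a polynomial of degree $n-1$ in $L$. Your explicit observation that only the $s=2$ block can feed the top coefficient $F_{n-1;n}$ is a detail the paper leaves implicit but is consistent with its argument.
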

	\begin{proof}
		The proof is analogous to the one given in the~\Cref{new_ex_kn}. The base case follows from the same computation as in~\Cref{new_ex_kn}. Now, let $n>1$; using the inductive hypothesis and an argument akin the one in the  proof of~\Cref{new_kno1}, we have
		\begin{align*} &B_{n,s} \big(\kappa_q'(1),\dotsc, \kappa_q^{(n-s+1)}(1)\big) = \sum_{a=s}^{n-s}  \beth_{a;n,s} q^a
		\end{align*}
		where
		$$ \beth_{a;n,s} =\sum_{j\in Q_{n,s}}  \sum_{(s_i)_{i=1,\dotsc, n-s+1}\atop_{s_1 +\dotsb + s_{n-s+1} = a\atop{ 0 \leq s_i \leq ij_i }}} n!  \prod_{i=1}^{n-s+1} \frac{1}{(i!)^{j_i}} \sum_{(k_{h;i})_{h=0,\dotsc,  i-1}\atop_{\atop{k_{0;i} +\dotsb + k_{i-1;i} = j_i\atop{k_{1;i}  +\dotsb + (i -1)k_{i-1;i} =s_i}}}} \prod_{p=0}^{i-1} \frac{A_{p;i}^{k_{p;i} }}{k_{p;i}! } \ .$$

		Using the closed formula~\eqref{closed} and $c_1 =1$ we have
		\begin{align*}
			\kappa_L^{(n)}(1) & =c_n + \sum_{q=1}^{L-1}\sum_{s=2}^n c_s \sum_{a=s}^{n-s} \beth_{a;n,s} q^{a}
			= c_n + \sum_{s=2}^n  \sum_{a=s}^{n-s} c_s\beth_{a;n,s} \sum_{m=0}^{a}H_{m,a}(L-1)^{a-m+1}
		\end{align*}
		where $H_{m;a}$ are the universal constants given in~\eqref{const_Fau}.
		Expanding the binomial $ (L-1)^{a-m+1} = \sum_{i= 0}^{a-m+1} \binom{a-m+1}{i} L^i (-1)^{a-m+1-i} $ and rearranging the terms, we obtain
		\begin{align*}
			%& =c_n + \sum_{s=2}^n  \sum_{a=s}^{n-s} c_s\beth_{a;n,s} \sum_{m=0}^{a}H_{m,a} \sum_{i= 0}^{a-m+1} \binom{a-m+1}{i} L^i (-1)^{a-m+1-i}
			%\\
			%& =c_n + \sum_{s=2}^n  \sum_{a=s}^{n-s} c_s\beth_{a;n,s} \sum_{i=0}^{a+1} \sum_{m= 0}^{\min(a, a+1-i)} \binom{a-m+1}{i} H_{m,a} L^i (-1)^{a-m+1-i}
			%\\
			%& =c_n + \sum_{s=2}^n  \sum_{i=0}^{n-s+1} L^i \sum_{a=i-1}^{n-s}  \sum_{m= 0}^{\min(a, a+1-i)} \binom{a-m+1}{i} c_s\beth_{a;n,s}H_{m,a} (-1)^{a-m+1-i}
			%\\
			\kappa_L^{(n)}(1) = c_n + \sum_{i=0}^{n-1} L^i\sum_{s=2}^{\min(n-i+1, n)}  \sum_{a=i-1}^{n-s}  \sum_{m= 0}^{\min(a, a+1-i)} \binom{a-m+1}{i} c_s\beth_{a;n,s}H_{m,a} (-1)^{a-m+1-i}
		\end{align*}
		which yields the claim.
	\end{proof}

	\begin{proposition}
		%	\label{mom_1}
		Let $\kappa$ be infinitely many times differentiable in a neighborhood of $1$.  If $\kappa'(1)= 1$, then we have, uniformly over $L$,
		\begin{align}\label{m1}
			0 & \leq \E[X_L] \leq 1\ , \\
			\label{m2}1 & \leq \E\big[X_L^2\big]\leq d \ .
		\end{align}
		Moreover, for all $n\geq 2$, as $L \to + \infty$,
		\begin{align}
			\label{con1b}	& \E\big[X_L^{2n}\big] = L^{n-1}(E_n+ o(1)) \ ,  \\
			\label{con2b}	& \E\big[X_L^{2n-1}\big] = O(L^{n-1})
		\end{align}
		where
		$$ E_n  = \frac{ (d+2s-2)!!}{(d-2)!!} F_{n-1,n}$$
		with $F_{n-1,n}$ given in~\Cref{new_k1}.
	\end{proposition}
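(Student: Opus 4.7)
\bigskip\noindent\textbf{Proof plan.} The statement closely parallels \Cref{new_kno1}, with the critical polynomial growth of $\kappa_L^{(n)}(1)$ from \Cref{new_k1} replacing the exponential growth used there. My plan is to combine \Cref{mat_momenti} (which expresses the derivatives of $\kappa_L$ at $1$ as polynomials in the spectral moments of $X_L$) with the asymptotics provided by \Cref{new_k1}, and to run an induction on $n$ in which the H\"older inequality controls the odd moments from the even ones.

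For the base case $n = 1$, I would plug $s = 1$ into \Cref{mat_momenti}. Using the initial conditions \eqref{cond_iniziali} and $\kappa_L'(1) = \kappa'(1)^L = 1$, this gives the key identity $(d-1)\E[X_L] + \E[X_L^2] = d$. Non-negativity of $X_L$ immediately yields $\E[X_L^2] \le d$, while Jensen's inequality $\E[X_L^2] \ge \E[X_L]^2$ turns the identity into the quadratic inequality $\E[X_L]^2 + (d-1)\E[X_L] - d \le 0$, whose positive root is $1$; hence $\E[X_L] \le 1$. Substituting back gives $\E[X_L^2] = d - (d-1)\E[X_L] \ge 1$, establishing \eqref{m1} and \eqref{m2}.

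For the inductive step, assume \eqref{con1b} and \eqref{con2b} hold for all $k \le n$; I would then apply \Cref{mat_momenti} at level $s = n+1$ and separate the two highest-order terms:
\[
\E[X_L^{2n+2}] + (n+1)(d-1)\E[X_L^{2n+1}] + \sum_{i=1}^{2n} a_{i;n+1}\E[X_L^i] = \frac{(d+2n)!!}{(d-2)!!}\,\kappa_L^{(n+1)}(1).
\]
By the inductive hypothesis each term of the residual sum is $O(L^{n-1})$, hence $o(L^n)$, while \Cref{new_k1} identifies the right-hand side as $L^n(E_{n+1} + o(1))$. Non-negativity of $X_L$ then yields the upper bound $\E[X_L^{2n+2}] \le L^n(E_{n+1} + o(1))$, and H\"older's inequality gives $\E[X_L^{2n+1}] \le \E[X_L^{2n+2}]^{(2n+1)/(2n+2)} = o(L^n)$, which is \eqref{con2b} at level $n+1$. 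Substituting this last bound back into the isolated identity eliminates the odd moment up to a $o(L^n)$ error, and delivers $\E[X_L^{2n+2}] = L^n(E_{n+1} + o(1))$, closing the induction.

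The only delicate point I anticipate is ensuring that the residual sum is genuinely of lower order: one has to keep track of the fact that \emph{both} even and odd moments up to order $2n$ enter the identity, and that the inductive hypothesis, now in its polynomial (rather than exponential) form, allows only a margin of one power of $L$ between the dominant and subdominant contributions. This is why the H\"older step producing $o(L^n)$ rather than merely $O(L^n)$ for $\E[X_L^{2n+1}]$ is essential: it is what lets us recover a clean asymptotic (and not only a big-$O$ bound) for $\E[X_L^{2n+2}]$. The constant $E_{n+1}$ then emerges automatically, equal to $\frac{(d+2n)!!}{(d-2)!!}\,F_{n;n+1}$ as claimed.
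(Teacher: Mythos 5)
Your proposal is correct and follows essentially the same route as the paper: the identity $(d-1)\E[X_L]+\E[X_L^2]=d$ from \Cref{mat_momenti} for the first two moments, then an induction combining \Cref{mat_momenti} at level $s=n+1$ with the polynomial asymptotics of \Cref{new_k1}, isolating the top two moments and controlling the odd one by H\"older. If anything, your explicit H\"older step yielding $\E[X_L^{2n+1}]=o(L^n)$ in the inductive step is slightly more careful than the paper's write-up, which only spells this out in the base case, and your Jensen/quadratic-root derivation of $\E[X_L]\le 1$ is an equivalent variant of the paper's argument by contradiction.
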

	\begin{proof} The proof is divided into three parts. In the first part we prove the bounds for first and second moment. In the second part prove~\eqref{con1b} and~\eqref{con2b} when $n =2$.
		Finally, in the third part, we prove \eqref{con1b} and \eqref{con2b} by induction.

		Using  \Cref{mat_momenti} and \Cref{new_k1} we have
		\begin{equation} \label{star}	(d-1)\mathbb{E}[X_L]+\E\big[X_L^2\big]=d \ .
		\end{equation}
		Since $X_L\geq 0$, we have $ \E\big[X_L^2\big] \leq d$.
		Also, we have $\E\big[X_L^2\big] \geq 1$, since otherwise we would have $	\mathbb{E}[X_L] \leq \E\big[X_L^2\big]^{1/2} < 1$, which contradicts \eqref{star}.
		Therefore we get \eqref{m2}.
		Moreover, \eqref{m1} follows from the fact that $X_L$ is non-negative and
		\begin{align*} \E[X_L] = \frac{ d - \E\big[X_L^2\big]}{d-1}\leq 1 \ . \end{align*}
		Next we move to the induction proof for the higher moments.

		Base case. Let $\big(a_{1;2}\big)_{i=1,\dotsc, 4} $ be as in \Cref{mat_momenti}. Using~\Cref{mat_momenti} and \Cref{new_k1} we have
		\begin{equation}\label{prev}a_{1;2}\E[X_L] + a_{2;2}  \E\big[X_L^2\big] + 2(d-1) \E[X_L^3] + \E[X_L^4]  = L(F_{1;2} + o(1))\ .
		\end{equation}
		Using~\eqref{m1} and~\eqref{m2}, we have $a_{1;2}\E[X_L] + a_{2;2}\E\big[X_L^2\big] = O(1)$ as $ L \to + \infty$. So~\eqref{prev} becomes
		$$2(d-1) \E\big[X_L^3] + \E\big[X_L^4\big]  = L(F_{1;2} + o(1))$$
		and we  obtain
		$$ E\big[X_L^4\big]\leq L(F_{1;2} + o(1))\ .$$
		Moreover,
		$$ \E\big[X_L^3\big] \leq \E\big[X_L^4\big]^{3/4} \leq L^{3/4} \Big( F_{1;2}^{3/4} +o(1)\Big) = O(L)\ . $$

		Inductive step. Let $(a_{i;n+1})_{i=1,\dotsc,2n+2}$ be as in~\Cref{mat_momenti} and let
		\begin{align*}S & = \sum_{i=1}^{2(n+1)}a_{i;n+1} \E\big[X_L^i\big]
			\\
			& =   a_{1;n+1} \E[X_L]+ a_{2;n+1}\E\big[X_L^2\big] + \sum_{i=2}^{n} \Big( a_{2i-1;n+1} \E\big[X_L^{2i-1}\big] + a_{2i;n+1}\E\big[X_L^{2i}\big]\Big)\\
			&+ (n+1)(d-1) \E\big[X_L^{2n+1}\big] + \E\big[X_L^{2n+2}\big] \ .
		\end{align*}
		By~\eqref{m1} and~\eqref{m2} we have
		$$ a_{1;n+1} \E[X_L]+ a_{2;n+1}\E[X_L^2]  = O(1) \ .$$
		For all $2\leq i \leq n$, using~\eqref{con1b} and~\eqref{con2b}, we obtain
		$$ a_{2i-1;n+1}\E[X_L^{2i-1}] + a_{2i;n+1}\E[X_L^{2i}]  =O(L^{i-1})\ .$$
		Combining the previous three equalities we have
		$$ S = (n+1)(d-1) \E[X_L^{2n+1}] + \E[X_L^{2n+2}] + O(L^{n-1})\ .$$
		Using~\Cref{mat_momenti},~\Cref{new_k1} and the definition of $E_{n+1}$ we have, as $L\to\infty$,
		$$(n+1)(d-1) \E\big[X_L^{2n+1}\big]   +  \E\big[X_L^{n+2}\big] + O\Big(L^{n-1}\Big) =L^n\big(E_{n+1}+o(1)\big) \ .$$
		Since $X_L\geq 0$, we have
		\begin{align*}
			& \E[X_L^{2n+2}] \leq L^n(F_{n+1} +o(1)) \\
			& \E[X_L^{2n+1}] \leq L^n \tonde{\frac{F_{n+1}}{(n+1)(d-1)} +o(1)} = O(L^{(n+1)-1})
		\end{align*}
		hence~\eqref{con1b} and~\eqref{con2b} hold for $n+1$.
	\end{proof}

	\subsection{Proof of \texorpdfstring{\Cref{main_theorem_1} and \Cref{main_theorem_3}}{main theorems}}\label{A6}

	The next well known result contains a simple yet useful characterization of the covariance kernel for deep neural networks.
	Since we could not find a formal proof of such characterization, we provide one.

	\begin{proposition} \label{ciccio}
		\label{lem::kernel} For all $s \geq 1$, the random field $T_s$ defined in \eqref{random_field} converges weakly in distribution as $n_1,\dotsc, n_s\to \infty$  to a Gaussian process with $n_{s+1}$ i.i.d.~centered components $ ( T^\star_{i;s})_{i=1,\dotsc, n_{s+1}}$. Furthermore, assuming the standard calibration condition $\Gamma_b +\Gamma_{W_0} =1$ and $\Gamma_W = (1- \Gamma_b)\Gamma_\sigma^{-1}$, the  limiting covariance $K_s$ satisfies $K_s(x,y) = \kappa_s(\langle x , y\rangle)$, and for all $s>1$

		$$K_s(x,y) =  \kappa_1 \circ \dots \circ \kappa_1(\langle x, y \rangle)$$
		composed $s-1$ times.
	\end{proposition}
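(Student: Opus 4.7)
The argument proceeds by induction on $s$, combined with the classical CLT/LLN machinery that underlies the Neal--Lee--Matthews type results. The asymptotic Gaussianity itself is standard; the content to verify is the \emph{angular} and \emph{compositional} form of the covariance, which is dictated by the calibration conditions.

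\emph{Preliminary observation.} The field $T_0(x)=W^{(0)}x+b^{(1)}$ is exactly Gaussian for any width, with i.i.d.\ components across the output index and covariance
\[
\E[T_{0;i}(x)T_{0;j}(y)] \;=\; \delta_{ij}\bigl(\Gamma_{W_0}\langle x,y\rangle+\Gamma_b\bigr)\,.
\]
Under $\Gamma_b+\Gamma_{W_0}=1$ and $x,y\in\S^d$, the diagonal entries equal $1$.

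\emph{Base case $s=1$.} Condition on $T_0$ and write
\[
T_{1;i}(x)=\sum_{j=1}^{n_1}W^{(1)}_{ij}\,\sigma(T_{0;j}(x))+b^{(2)}_i.
\]
Given $T_0$, this is a centered Gaussian in $i$, with conditional covariance
$
\Gamma_b+\tfrac{\Gamma_W}{n_1}\sum_j \sigma(T_{0;j}(x))\sigma(T_{0;j}(y)).
$
Since the $T_{0;j}$ are i.i.d.\ in $j$, the LLN gives convergence in probability, as $n_1\to\infty$, to
\[
\kappa_1(\langle x,y\rangle)\;:=\;\Gamma_b+\Gamma_W\,\E\bigl[\sigma(Z_x)\sigma(Z_y)\bigr],
\]
where $(Z_x,Z_y)$ is bivariate centered Gaussian with unit variances and correlation $\Gamma_{W_0}\langle x,y\rangle+\Gamma_b$. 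The choice $\Gamma_W=(1-\Gamma_b)\Gamma_\sigma^{-1}$ yields $\kappa_1(1)=1$, so each limiting component has unit variance. Because the right-hand side depends on $x,y$ only through $\langle x,y\rangle$, the kernel is angular; independence across $i$ follows from independence of the rows of $W^{(1)}$ and of the coordinates of $b^{(2)}$. Finally, a routine CLT in $i$ (e.g.\ via characteristic functions, conditioning and dominated convergence) upgrades the joint convergence of finite-dimensional distributions $(T_{1;i}(x_1),\dots,T_{1;i}(x_k))$ to centered Gaussianity.

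\emph{Inductive step.} Assume $T_{s-1}$ converges, as $n_1,\dots,n_{s-1}\to\infty$, to a Gaussian vector field with i.i.d.\ components and kernel $K_{s-1}(x,y)=\kappa_{s-1}(\langle x,y\rangle)$. Conditionally on the (limiting) field $T^\star_{s-1}$, the representation
\[
T_{s;i}(x)=\sum_{j=1}^{n_s} W^{(s)}_{ij}\,\sigma(T^\star_{s-1;j}(x))+b^{(s+1)}_i
\]
is again a sum of i.i.d.\ centered random variables in $j$. By the same LLN/CLT pair, it converges to a centered Gaussian with covariance
\[
\Gamma_b+\Gamma_W\,\E\bigl[\sigma(Z^\star_x)\sigma(Z^\star_y)\bigr],
\]
where now $(Z^\star_x,Z^\star_y)$ is centered Gaussian with unit variances and correlation $\kappa_{s-1}(\langle x,y\rangle)$. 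Since $\kappa_1$ depends on the Gaussian inputs only through their correlation, the right-hand side is precisely $\kappa_1\!\circ\!\kappa_{s-1}(\langle x,y\rangle)$. Unrolling the recursion gives $K_s=\kappa_1^{\circ s}$, i.e.\ $\kappa_1$ iterated $s-1$ times, as claimed. Independence of the limiting components across $i$ propagates through as in the base step.

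\emph{Main obstacle.} The one technical subtlety is combining the inductive hypothesis (a statement about the law of $T^\star_{s-1}$) with the conditional CLT at layer $s$: one needs simultaneous (not merely sequential) control on the widths. The cleanest way is to invoke a Skorokhod-type coupling, or equivalently to argue layer-by-layer at the level of characteristic functions, showing that the conditional characteristic function of $T_s$ given the preceding field is continuous in the topology in which $T_{s-1}\Rightarrow T^\star_{s-1}$, and then passing to the limit. For our purposes (finite-dimensional convergence suffices to identify the covariance kernel), this can be streamlined into a straightforward induction on $s$ in which we fix $x_1,\dots,x_k\in\S^d$ and take $n_1,\dots,n_s\to\infty$ sequentially, as originally done by Neal.
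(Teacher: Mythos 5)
Your proof is correct in substance but follows a genuinely different route from the paper's. The paper does not reprove the Gaussian limit at all: it cites the literature for the weak convergence and for the recursion $K_s(x,y)=\Gamma_b+\Gamma_W\,\E_{f\sim\mathcal{GP}(0,K_{s-1})}[\sigma(f(x))\sigma(f(y))]$, and then devotes the proof entirely to the compositional structure, which it extracts by expanding $\sigma$ in Hermite polynomials and invoking the diagram formula to write $\E[\sigma(Z_1)\sigma(Z_2)]=\sum_q \frac{J_q(\sigma)^2}{q!}\cov(Z_1,Z_2)^q$; the induction then closes because this series depends on $(Z_1,Z_2)$ only through their covariance. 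You instead rederive the limit itself via conditioning, the LLN for the conditional covariance, and a sequential-width argument \`a la Neal, and you replace the Hermite/diagram computation by the more elementary observation that a centered bivariate Gaussian with unit variances is determined by its correlation, so $\E[\sigma(Z_x)\sigma(Z_y)]$ is automatically a function of that correlation alone. Your route is self-contained and arguably cleaner for this proposition; the paper's Hermite expansion buys the explicit power series for $\kappa_1$, which it reuses later (e.g.\ to show $\kappa_1'$ is increasing in the proof of Theorem 3.1). One minor imprecision on your side: conditionally on the previous layer the output is \emph{exactly} Gaussian (the weights are Gaussian), so no CLT is needed, only the LLN for the random conditional covariance. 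Finally, note a shared wrinkle: your $\kappa_1(t)$ is defined through the correlation $\Gamma_b+\Gamma_{W_0}t$, while at deeper layers the correlation fed into the next layer is $\kappa_{s-1}(\langle x,y\rangle)$ itself, so the iterated map is literally $\kappa_1$ only when $\Gamma_b=0$ (otherwise one iterates $g(\rho)=\Gamma_b+\Gamma_W\E_\rho[\sigma\sigma]$ with $\kappa_1=g\circ(\Gamma_b+\Gamma_{W_0}\,\cdot\,)$); the paper's own proof elides the same affine reparametrization, so this is not a gap relative to the paper, but it is worth flagging.
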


	\begin{proof}[Proof of the compositional structure of $K_s$]
		The limiting covariance function of $T_s$ is given by (see \cite{10.1214/23-AAP1933})
		$$  K_s(x,y) =\begin{cases}
			\Gamma_b + \Gamma_{W_0}\langle x , y\rangle, &s =0\\
			\Gamma_b + \Gamma_W \E_{f\sim \mathcal{GP}(0, K_{s-1})} [\sigma(f(x)) \sigma (f(y))],&s =1,\dotsc , L
		\end{cases}.$$
		Recall that $\E[\sigma(Z)^2]<\infty$ for $Z\sim \mathcal{N}(0,1)$.
		By the completeness of Hermite polynomials, there exists $(J_q(\sigma))_{q\in \N}$ such that
		$$ \sigma(Z) = \sum_{q=0}^\infty \frac{ J_q(\sigma)}{q!} H_q(Z) \ . $$
		In particular, if $(Z_1,Z_2)$ is a centered Gaussian vector with $\E[Z_1^2] = \E[Z_2^2]$, using the well-known diagram formula (see for instance~\cite[Prop. 4.15]{marinucci2011random}) we obtain
		$$ \E[\sigma(Z_1) \sigma(Z_2)] =\sum_{q=0}^\infty \frac{ J_q(\sigma)^2}{q!} \cov(Z_1,Z_2)^q\ .$$
		We now proceed by induction.
		Since $x,y\in \S^d$, then $K_0(x,x) = K_0(y,y)=1$, so
		$$ \E_{f\sim \mathcal{GP}(0, K_{0})}[\sigma(f(x)) \sigma(f(y))]  = \E[\sigma(Z_{1})\sigma(Z_{2})]$$
		where
		$$(Z_{1}, Z_{2}) \sim \mathcal{N}\tonde{0, \begin{pmatrix}
				1 & \Gamma_b + \Gamma_{W_0}\langle x, y \rangle \\
				\Gamma_b + \Gamma_{W_0}\langle x, y \rangle &1
		\end{pmatrix}} . $$
		Thus
		$$K_1(x,y) =  \kappa_1(\langle x, y \rangle) = \Gamma_b + \Gamma_W \sum_{q=0}^\infty \frac{ J_q(\sigma)^2}{q!} (\Gamma_b + \Gamma_{W_0}\langle x, y\rangle)^q\ .$$
		Additionally, using the definition of $\Gamma_\sigma$ we get
		$\kappa_1(1)= K_1(x,x) = \Gamma_b + \Gamma_W\E[\sigma^2(Z_1)] = 1 $.
		We now suppose the claim for $s$ and prove it for $s+1$. Using $K_{s}(x,x) = K_s(y,y)=1$ we obtain
		\begin{align*} K_{s + 1}(x,y) = \Gamma_b + \Gamma_{W} \sum_{q=0}^\infty \frac{ J_q(\sigma)^2}{q!} \tonde{ K_{s}(x,y)}^q = \kappa_1(K_s(x,y))\ .
		\end{align*}
	\end{proof}

	We are now in a position to prove our main results.
	\begin{proof}[Proof of \Cref{main_theorem_1}]
		We combine the previous results in each regime.
		%Let $\kappa$ be the covariance kernel of $T_1$.  By \Cref{lem::kernel}, the covariance of $T_L$ is  given by the $(L-1)$ composition of $\kappa$. To conclude the proof, we need only to combine some of the previous results. More precisely,
		\begin{itemize}
			\item (Low-disorder case) Let $\kappa'(1)<1$. If $\kappa\in C^\infty$, we can use \Cref{new_kno1} to obtain~\eqref{case1a}. Otherwise, if  $\kappa$ is only $r$ times differentiable,  using~\Cref{mat_momenti} we can prove that $X_L$ has only $2r$ finite moments. Following the proof of~\Cref{new_kno1} also when $\kappa$ is only $r$ times differentiable, one can  obtain the asymptotics for these moments.
			\item (Sparse case) Let $\kappa'(1)=1$. The proof for the asymptotics for all the moment greatet than $1$ mirrors that of the previous cases, using~\Cref{new_ex_kn}, insted~\Cref{new_kno1}. Let us now prove that  $\E[X_L]\to 0 $ as  $L\to + \infty$. From~\eqref{cov_gege}, since the Gegenbauer polynomials are orthogonal (cfr.~\eqref{orto}) and since $G_{0,d}(x) = 1$,  we obtain
			\begin{equation}\label{C0_int}\int_{-1}^1 \kappa_L(u) (1-u^2)^{d/2-1} \di u = D_{0;\kappa_L} \int_{-1}^1 (1-u^2)^{d/2-1} \di u  \ .
			\end{equation}
			Using the computation in the proof of \Cref{ciccio}, we have
			$$
			\kappa_1(u) = \Gamma_b + \Gamma_W \sum_{q=0}^\infty \frac{ J_q^2}{q!} u^q  \ . $$
			Since $\kappa\in C^1$, we obtain
			\begin{equation}\label{cont}
				\kappa_1'(u) = \Gamma_W \sum_{q=1}^\infty \frac{ J_q^2}{(q-1)!} u^{q-1}  \ ,
			\end{equation}
			hence $\kappa'(u) <\kappa'(1) = 1$ for all $u \neq \pm1$, thus  $\kappa$ is a contraction on $[-1,1]$. Using the Banach fixed-point theorem, since $1$ is a fixed-point of $\kappa$ we have
			$$ \kappa_L(u) \to 1 \quad \text{for all } u \in (-1,1] \ . $$
			Using the dominated convergence in~\eqref{C0_int} we have
			$$ \lim_{L\to \infty}  D_{0;\kappa_L}=1 \ , $$
			hence $X_L \to 0 $ in probability. Now we prove that $(X_L)$ is uniformly integrable.
			Using the Cauchy--Schwartz and the Markov inequalities, for any $\varepsilon>0$ we have
			$$
			\E\quadre{X_L \mathbbm 1_{[d/\varepsilon, \infty)}(X_L)}
			\leq
			\E\quadre{X_L^2}^{1/2}
			\P(X_L \geq d/\varepsilon)^{1/2}
			\leq
			d^{1/2} \frac{\E[X_L^2]^{1/2}}{d/\varepsilon}
			\leq
			\varepsilon \ .
			$$
			The claim for the first moment follows from the link between convergence in probability of uniformly integrable random variables and convergence in $L^1$.
			\item (High-disorder case) Let $\kappa'(1)>1$. The proof for the  asymptotics for all moments  greater than $1$ mirrors that of the first case; indeed \Cref{new_ex_kn} hold whenever $\kappa'(1) \neq 1$. Let us prove the claim for the first moment.  From~\eqref{cont}, $\kappa'$ is a monotone increasing continuous function on $[0,1]$. We note that  if $\sigma$ is not a polynomial, then $\kappa'(0)<1$, since $\kappa'(0) = \Gamma_WJ_1(\sigma)^2 = 1$ if and only if $J_i(\sigma) =0$ for all $i\geq 2$.  Since $\kappa'$ is an increasing continuous function such that $\kappa'(0)<1$ and $\kappa'(1)>1$, there exists $u_\star$ such that $
			\kappa'(u_\star) =1$. Using the Lagrange theorem it is easy to prove that $\kappa([0,u_\star] )\subseteq [0,u_\star] $ and hence $\kappa$ is a contration in $[0, u_\star]$. From the Banach fixed-point theorem it has a unique fixed-point $u^\star_1\in[0, u^\star]$. In particular, $\kappa_L(x) \to \kappa(u^\star_1) = u^\star_1<1$ as $L\to \infty$, for all $x\in [0,u_\star]$.  Now, using the dominated convergence, we obtain
			$$ \lim_{L\to + \infty} \int_{0}^{u^\star} \kappa_L(x) (1-x^2)^{d/2-1} \di x= u^\star_1 \int_{0}^{u^\star} (1-x^2)^{d/2-1} \di x \ . $$
			From Cauchy-Swartz inequality, we get
			\begin{align*} \lim_{L \to + \infty}  \int_{-1}^1 \kappa_L(x)  (1-x^2)^{d/2-1} \di x  \leq& \int_{-1}^0 (1-x^2)^{d/2-1} \di x + u_1^\star \int_{0}^{ u^\star}(1-x^2)^{d/2-1} \di x \\
				&+\int_{u^\star}^1 (1-x^2)^{d/2-1}\di x
				< \int_{-1}^1  (1-x^2)^{d/2-1} \di x  \ .
			\end{align*}
			Using~\eqref{C0_int} we have
			$$ \lim_{L \to + \infty} D_{0;\kappa_L} <1 \ .$$
			Now,
			$$ \E[X_L] = \sum_{\ell=1}^\infty \ell D_{\ell;\kappa_L} \geq  \sum_{\ell=1}^\infty D_{\ell;\kappa_L} = 1 - D_{0, \kappa_L}$$
			so
			$$ \liminf_{L \to \infty}  \E[X_L] \geq 1 - \lim_{L \to \infty} D_{0, \kappa_L} > 0 $$
			hence the claim holds.
		\end{itemize}

	\end{proof}

	\begin{proof}[Proof of~\Cref{main_theorem_3}]  First we prove the convergence of the covariance.  Let $x\in \S^d$ and $\rm N\in \S^d$ the north pole. From~\eqref{cov_gege} we have
		$$ \kappa_L(x,{\rm N})= \mathrm{Cov}\big(T_L(x), T_L({\rm N})\big) = \sum_{\ell=0}^\infty D_{\ell;L} G_{\ell,d}(\langle  x, {\rm N} \rangle)  = D_{0, L} + \sum_{\ell=1}^\infty D_{\ell;L} G_{\ell,d}(\langle  x, {\rm N} \rangle) \ . $$
		If $\kappa'(1)\leq 1$, using \Cref{main_theorem_1}  we have $\E[X_L] \to 0 $.
		Hence, since $X_L\geq 0 $ we get $D_{0,L} \to 1 $, and since $\sum D_{\ell;L} = 1$ we get $D_{\ell;L} \to 0$ for all $\ell\neq 0$.  On the other hand, if $\kappa'(1)>1$, let $u_1^\star$ as in the proof of~\Cref{main_theorem_1}, and let $x\in \S^d$ such that  $\langle x , {\rm N}\rangle =  u_1^\star$. Then
		$$\kappa_L(\langle x, N\rangle ) \to \kappa(u_1^\star) =  u_1^\star<1 \ .  $$
		Using the definition of $T_L^{(r)}$ and the spherical harmonic decomposition  (cfr.~\eqref{espansione}), since the  spherical  harmonics are eigenfuctions of the Laplace-Beltrami operator (cfr.~\eqref{laplacian}), we obtain
		$$ T_L^{(r)}(x) =\sum_{\ell=0}^\infty \sum_{m=1}^{n_{\ell,d}} \tonde{- \ell(\ell+d-1)}^{r/2} a_{\ell m }Y_{\ell m}(x)
		$$
		and so
		\begin{align*} \Var( T_L^{(r)}(x) ) &= \sum_{\ell, \ell' =0}^\infty \sum_{m=1}^{n_{\ell,d}} \sum_{m'=1}^{n_{\ell',d}}  \tonde{ \ell \ell' (\ell+d-1)(\ell'+d-1)}^{r/2}  Y_{\ell m}(x) Y_{\ell' m'} \E[a_{\ell m} a_{\ell' m'}]\\
			& = \sum_{\ell=0}^\infty  C_{\ell}  \tonde{ \ell  (\ell+d-1)}^{r}  \sum_{m=1}^{n_{\ell,d}}  Y_{\ell m}(x)^2 = \sum_{\ell=0}^{\infty } (\ell(\ell+d-1))^r\frac{C_\ell n_{\ell, d}}{\omega_d} \\
			&= \sum_{\ell=0}^\infty  \ell^r(\ell+ d-1)^r D_{\ell;L}  = \E[X_L^r(X_L+ d-1)^r]
		\end{align*}
		where the second equality holds by the property of the triangular sequence $(a_{\ell,m})$ (cfr.~\eqref{alm}), the third follows from the addiction formula for spherical harmonics~\cite[Theorem 2.9]{atkinson2012spherical}. The claim follows using the asympotics for the moments given in~\Cref{main_theorem_1}.
	\end{proof}

	\subsection{Kernels associated to the Gaussian and the hyperbolic tangent}\label{kernel_comp}
	\begin{lemma}\label{lemm::gaussian} Let $\sigma_1(x) = e^{-a^2x^2/2}$ be the Gaussian activation. Then the normalized associated kernel is given by
		$$ \kappa(u) = \sqrt{ \frac{1 + 2a^2}{(a^2+1)^2 -a^4 u^2}}\ .$$
	\end{lemma}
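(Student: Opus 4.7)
The starting point is the identity, established inside the proof of \Cref{ciccio}, that under the calibration convention $\Gamma_b=0$, $\Gamma_W=\Gamma_\sigma^{-1}$ adopted throughout Section~4, the normalized kernel admits the representation
\[
\kappa(u)=\frac{\E[\sigma_1(Z_1)\sigma_1(Z_2)]}{\E[\sigma_1(Z)^2]},
\]
where $Z\sim\mathcal N(0,1)$ and $(Z_1,Z_2)$ is a centered Gaussian vector with unit marginal variances and correlation $u\in(-1,1)$. The plan is simply to compute both Gaussian integrals in closed form and divide. The denominator is immediate: the one-dimensional integral $\E[e^{-a^2 Z^2}]$ is computed by completing the square, yielding $(1+2a^2)^{-1/2}$, which gives the square-root factor $\sqrt{1+2a^2}$ in the numerator of the claimed formula.

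The substantive work lies in the two-dimensional integral $N(u):=\E[e^{-a^2(Z_1^2+Z_2^2)/2}]$. I would combine the two quadratic forms sitting in the exponent, namely $-\tfrac{a^2}{2}(z_1^2+z_2^2)$ from the integrand and $-\tfrac{1}{2(1-u^2)}(z_1^2-2uz_1z_2+z_2^2)$ from the joint density, into a single quadratic form $-\tfrac12 z^\top M z$ with
\[
M=\frac{1}{1-u^2}\begin{pmatrix} 1+a^2(1-u^2) & -u \\ -u & 1+a^2(1-u^2)\end{pmatrix}.
\]
Applying the formula $\int_{\R^2}e^{-\frac12 z^\top M z}\,\mathrm dz=2\pi/\sqrt{\det M}$ together with the $(2\pi\sqrt{1-u^2})^{-1}$ normalization of the bivariate Gaussian density gives
\[
N(u)=\frac{\sqrt{1-u^2}}{\sqrt{(1+a^2(1-u^2))^2-u^2}}.
\]

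The main (very mild) obstacle is the algebraic simplification: one verifies the factorization
\[
(1+a^2(1-u^2))^2-u^2=(1-u^2)\bigl[(a^2+1)^2-a^4u^2\bigr],
\]
either by direct expansion or by observing that both sides vanish at $u=\pm 1$ and agree at $u=0$ up to the quadratic leading coefficient. Plugging this back into $N(u)$ cancels the $(1-u^2)$ factors and yields $N(u)=((a^2+1)^2-a^4u^2)^{-1/2}$. Dividing by the denominator computed above produces exactly $\kappa(u)=\sqrt{(1+2a^2)/((a^2+1)^2-a^4u^2)}$. A sanity check at $a=1$ returns $\kappa(u)=\sqrt{3/(4-u^2)}$, matching the expression quoted in the low-disorder example of Section~\ref{sec::numerical}.
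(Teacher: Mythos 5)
Your proof is correct, and it takes a genuinely different route from the paper's. The paper expands $\sigma_1$ in Hermite polynomials, applies the diagram formula to write $\E[\sigma_1(Z_1)\sigma_1(Z_2)]$ as the power series $\frac{1}{1+a^2}\sum_q \frac{(2q)!}{(q!)^2 4^q}\bigl(\frac{a^2u}{1+a^2}\bigr)^{2q}$, and then recognizes this series as $\frac{\mathrm d}{\mathrm dx}\arcsin(\alpha x)$ with $\alpha=a^2/(1+a^2)$, i.e.\ $(1-\alpha^2u^2)^{-1/2}$ up to constants; you instead evaluate the two Gaussian integrals directly by assembling the quadratic forms and using the determinant formula. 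I checked your matrix $M$, the resulting $N(u)=\sqrt{1-u^2}\,\bigl[(1+a^2(1-u^2))^2-u^2\bigr]^{-1/2}$, the factorization $(1+a^2(1-u^2))^2-u^2=(1-u^2)\bigl[(a^2+1)^2-a^4u^2\bigr]$, and the denominator $\E[e^{-a^2Z^2}]=(1+2a^2)^{-1/2}$; all are correct, and the ratio gives exactly the claimed $\kappa$. Your argument is more elementary and self-contained: it needs nothing beyond completing the square, whereas the paper must quote the Hermite expansion of the Gaussian and the Taylor series of $\arcsin$. What the paper's route buys in exchange is consistency with the general framework of \Cref{ciccio}, where kernels are systematically expressed through the Hermite coefficients $J_q(\sigma)$; making those coefficients explicit for the Gaussian activation is potentially reusable elsewhere, while your computation is specific to this closed form. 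One presentational nit: you should state (as you implicitly do) that "normalized associated kernel" here means the ratio $\E[\sigma_1(Z_1)\sigma_1(Z_2)]/\E[\sigma_1(Z)^2]$, which is what the paper's $\Gamma_b=0$, $\Gamma_W=\Gamma_\sigma^{-1}$ calibration produces; with that in place the proof is complete.
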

	\begin{proof}
		It is well-known that
		$$ \sigma_1(x) = \sum_{n=0}^\infty (-1)^n \frac{a^{2n}}{n! (1+a^2)^{n+1/2}2^n} H_{2n}(x) \ .
		$$
		Thus, taking
		$$J_{2q+1}(\sigma)  =0 , \quad J_{2q} = (-1)^q \frac{a^{2q} (2q)!}{q! (1+a^2)^{q+1/2}2^q}$$
		we obtain
		$$ \sigma_1(x) = \sum_{q=0}^\infty \frac{J_{2q}}{(2q)!} H_{2q}(x)\ .$$
		Using once again the diagram formulae, for $Z_1,Z_2$ standard Gaussian random variables with $\E[Z_1Z_2]=u$ we have
		$$\E[\sigma_1(Z_1)\sigma(Z_2)]= \sum_{q=0}^\infty  \frac{J_{2q}^2}{(2q)!} u^{2q} =  \frac{1}{1+a^2} \sum_{q=0}^\infty \frac{(2q)!}{(q!)^2 4^q} \tonde{ \frac{a^2 u}{1+a^2}}^{2q}\ .
		$$
		Using the Taylor expansion of $\arcsin(\alpha x)$
		$$ \arcsin\tonde{\alpha x } = \sum_{n=0}^{\infty} \frac{(2n)!}{4^n (n!)^2 (2n+1)} \tonde{\alpha x}^{2n+1}$$
		we obtain
		$$ \frac{d}{dx} \arcsin\tonde{\alpha x} =  \alpha \sum_{n=0}^\infty \frac{(2n)!}{4^n (n!)^2 } \tonde{\alpha x}^{2n} \ .$$
		Hence, putting $\alpha = a^2/(1+a^2)$ and  after normalizing the variance to $1$, the claim follows.
	\end{proof}

	\begin{lemma}\label{der_sigmoide} Let $ \sigma_3(x) = \tanh(x)$ be the sigmoid activation function.
		Then the derivative of the normalized associated kernel at the origin is greater than $1$.
	\end{lemma}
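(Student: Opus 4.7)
The plan is to leverage the Hermite-coefficient representation of the normalized kernel established in the proof of \Cref{ciccio}. Under the calibration $\Gamma_b = 0$, this representation reads
\[
\kappa(u) \;=\; \frac{1}{\Gamma_\sigma}\sum_{q=0}^\infty \frac{J_q(\sigma)^2}{q!}\,u^q, \qquad J_q(\sigma) \;=\; \E[\sigma(Z)H_q(Z)],\ Z\sim\mathcal N(0,1),
\]
with $\Gamma_\sigma = \E[\sigma(Z)^2]$. Formal term-by-term differentiation gives
\[
\kappa'(1) \;=\; \frac{1}{\Gamma_\sigma}\sum_{q=1}^\infty \frac{J_q(\sigma)^2}{(q-1)!},
\]
and this differentiated series converges at $u=1$ because $\sum_{q\ge 1} J_q(\sigma)^2/(q-1)! = \E[\sigma'(Z)^2]$ by Plancherel for Hermite coefficients, and for $\sigma = \tanh$ we have $\sigma'=1-\tanh^2 \in [0,1]$.

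The decisive algebraic step is the equivalence
\[
\kappa'(1) > 1 \;\iff\; \sum_{q=1}^\infty \frac{(q-1)\,J_q(\tanh)^2}{q!} \;>\; J_0(\tanh)^2.
\]
Since $\tanh$ is odd, $J_0(\tanh) = \E[\tanh(Z)] = 0$, so the right-hand side vanishes. Moreover, the $q=1$ term on the left-hand side is zero, and all remaining summands are non-negative. The problem therefore collapses to producing a single index $q \ge 2$ with $J_q(\tanh) \ne 0$; the parity of $\tanh$ further restricts the admissible indices to odd $q \ge 3$.

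To rule out the degenerate scenario in which $J_q(\tanh)=0$ for every $q \ge 2$, I would argue by contradiction: such a vanishing would force the $L^2(\gamma)$-convergent Hermite expansion of $\tanh$ to collapse to the affine function $x \mapsto J_1(\tanh)\,x$ almost everywhere with respect to the standard Gaussian measure $\gamma$, hence pointwise by continuity of both sides, which contradicts the boundedness of $\tanh$. Consequently, at least one $J_q(\tanh)$ with $q\ge 3$ is non-zero, and the strict inequality $\kappa'(1)>1$ follows. The only real (and rather mild) obstacle I anticipate is justifying the termwise differentiation of the kernel series up to the boundary $u=1$, which is dispatched by the finiteness of $\E[(\tanh'(Z))^2]$ noted above; the remainder is a routine consequence of Hermite analysis combined with the qualitative fact that $\tanh$ is not affine.
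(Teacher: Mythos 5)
Your proof is correct, but it takes a genuinely different route from the paper's. The paper works intrinsically on the sphere: it expands $\kappa$ in normalized Gegenbauer polynomials, uses the oddness of $\tanh$ to conclude $C_0=0$, invokes the derivative formula \eqref{geg_der} to write $\kappa'(1)=\sum_{\ell\ge1}C_\ell\, n_{\ell,d}\,\ell(\ell+d-1)/\omega_d$, and gets strictness from the fact that $\tanh$ is not a polynomial (so some $C_\ell\neq0$ with $\ell(\ell+d-1)>1$), comparing against $\kappa(1)=1$. You instead work on the real line with the Hermite expansion from the proof of \Cref{ciccio}, reducing the claim to $\sum_{q\ge2}(q-1)J_q(\tanh)^2/q!>J_0(\tanh)^2$, killing the right-hand side by oddness and the $q=1$ term by the factor $(q-1)$, and producing a nonzero $J_q$ with $q\ge3$ from the fact that $\tanh$ is bounded, hence not affine. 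The two arguments are structurally parallel (odd symmetry removes the constant mode; non-linearity/non-polynomiality supplies a higher nonzero coefficient that is weighted by a factor exceeding $1$ in the derivative), but yours is more elementary and dimension-free: it avoids Gegenbauer machinery entirely and, as a bonus, identifies $\kappa'(1)=\E[\tanh'(Z)^2]/\E[\tanh(Z)^2]$, which both justifies the termwise differentiation at $u=1$ and makes the connection with the edge-of-chaos criterion explicit. The paper's route has the advantage of staying within the angular-power-spectrum formalism used throughout the rest of the appendix and of applying verbatim to any odd, non-polynomial activation without reference to Hermite coefficients of the derivative.
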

	\begin{proof} Let $\kappa$ be the normalized associated kernel; by Schoenberg's theorem we have
		\begin{equation}\label{ang_int} \kappa(u) = \sum_{\ell=0}^\infty C_{\ell} \frac{n_{\ell,d}}{\omega_d}, \quad
			C_\ell = A_{d;\ell} \int_{-1}^1 \kappa(t) G_{\ell,d}(t)(1-t^2)^{d/2 -1} \mathrm d t \ ,
		\end{equation}
		for some normalization factor $(A_{d;\ell})_{\ell\in \N}$. By~\eqref{covariance} we have
		$$ \kappa(u)  = \frac{ 1}{\E[\tanh(Z_1)^2]} \E[\tanh(Z_1)\tanh(Z_2)], \quad (Z_1,Z_2) \sim \mathcal{N}\tonde{ 0, \begin{pmatrix}
				1 & u \\ u &1
		\end{pmatrix}} \ . $$
		Since $\tanh$ is odd, $\kappa$ is odd; in view of~\eqref{ang_int}, using $G_{0,d}(t)=1$, we have $C_0=0$.\\
		Now note that $\tanh$ is not a polynomial, hence there must exist infinitely many $\ell>1$ such that $C_\ell \neq  0$; using the derivative formula for Gegenbauer polynomials~\eqref{geg_der} we have
		$$ \kappa'(1) =\sum_{\ell=1}^\infty C_\ell \frac{ n_{\ell,d} \ell(\ell + d -1)}{\omega_{d}}> \sum_{\ell=1}^\infty C_\ell \frac{ n_{\ell,d}}{\omega_d}=1$$
		where the last inequality follows from $\kappa(1) =1$.
	\end{proof}

\end{document}